\documentclass{article} % For LaTeX2e
\usepackage{iclr2024_conference,times}

% Optional math commands from https://github.com/goodfeli/dlbook_notation.
%%%%% NEW MATH DEFINITIONS %%%%%

\usepackage{amsmath,amsfonts,bm}

% Mark sections of captions for referring to divisions of figures

% Highlight a newly defined term

% Figure reference, lower-case.

% Figure reference, capital. For start of sentence

% Section reference, lower-case.

% Section reference, capital.

% Reference to two sections.

% Reference to three sections.

% Reference to an equation, lower-case.
\def\eqref#1{equation~\ref{#1}}
% Reference to an equation, upper case

% A raw reference to an equation---avoid using if possible

% Reference to a chapter, lower-case.

% Reference to an equation, upper case.

% Reference to a range of chapters

% Reference to an algorithm, lower-case.

% Reference to an algorithm, upper case.

% Reference to a part, lower case

% Reference to a part, upper case

\def\1{\bm{1}}

% Random variables

% rm is already a command, just don't name any random variables m

% Random vectors

% Elements of random vectors

% Random matrices

% Elements of random matrices

% Vectors

% Elements of vectors

% Matrix

% Tensor
\DeclareMathAlphabet{\mathsfit}{\encodingdefault}{\sfdefault}{m}{sl}
\SetMathAlphabet{\mathsfit}{bold}{\encodingdefault}{\sfdefault}{bx}{n}

% Graph

% Sets

% Don't use a set called E, because this would be the same as our symbol
% for expectation.

% Entries of a matrix

% entries of a tensor
% Same font as tensor, without \bm wrapper

% The true underlying data generating distribution

% The empirical distribution defined by the training set

% The model distribution

% Stochastic autoencoder distributions

 % Laplace distribution

% Wolfram Mathworld says $L^2$ is for function spaces and $\ell^2$ is for vectors
% But then they seem to use $L^2$ for vectors throughout the site, and so does
% wikipedia.

 % See usage in notation.tex. Chosen to match Daphne's book.

\DeclareMathOperator*{\argmin}{arg\,min}

\usepackage{hyperref}
\usepackage{url}
\usepackage{booktabs}
\usepackage{amsmath}
\usepackage{amsthm}
\usepackage{amssymb}
\usepackage{wrapfig}
\usepackage{floatrow}
\usepackage{graphicx}
\usepackage{multirow}

\newtheorem{prop}{Proposition}

%\title{LinFusion:\\ 16K  Image Generation on One GPU}
\title{LinFusion: 1 GPU, 1 Minute, 16K Image}

% Authors must not appear in the submitted version. They should be hidden
% as long as the \iclrfinalcopy macro remains commented out below.
% Non-anonymous submissions will be rejected without review.

\author{
Songhua Liu, Weihao Yu, Zhenxiong Tan, and Xinchao Wang\thanks{Corresponding Author} \\
National University of Singapore\\
\texttt{\{songhua.liu,weihaoyu,zhenxiong\}@u.nus.edu,xinchao@nus.edu.sg}\\
\href{https://lv-linfusion.github.io}{https://lv-linfusion.github.io}
}

% The \author macro works with any number of authors. There are two commands
% used to separate the names and addresses of multiple authors: \And and \AND.
%
% Using \And between authors leaves it to \LaTeX{} to determine where to break
% the lines. Using \AND forces a linebreak at that point. So, if \LaTeX{}
% puts 3 of 4 authors names on the first line, and the last on the second
% line, try using \AND instead of \And before the third author name.

\newfloatcommand{figureboxl}{figure}[\nocapbeside][0.6\dimexpr(\textwidth-\columnsep)\relax]
\newfloatcommand{figureboxs}{figure}[\nocapbeside][4cm\relax]
\newfloatcommand{tablebox}{table}[\nocapbeside][0.39\dimexpr(\textwidth-\columnsep)\relax]
\newfloatcommand{tableboxs}{table}[\nocapbeside][0.44\dimexpr(\textwidth-\columnsep)\relax]
\newfloatcommand{tableboxl}{table}[\nocapbeside][0.48\dimexpr(\textwidth-\columnsep)\relax]
\newfloatcommand{tableboxll}{table}[\nocapbeside][0.53\dimexpr(\textwidth-\columnsep)\relax]
\newfloatcommand{tableboxf}{table}[\nocapbeside][\linewidth\relax]
\newfloatcommand{figureboxf}{figure}[\nocapbeside][\linewidth\relax]

\iclrfinalcopy % Uncomment for camera-ready version, but NOT for submission.
\begin{document}

\maketitle

\begin{figure}[h]
\begin{floatrow}[1]
\figureboxf{}{
  \centering
  \vspace{-0.8cm}
  \includegraphics[width=\linewidth]{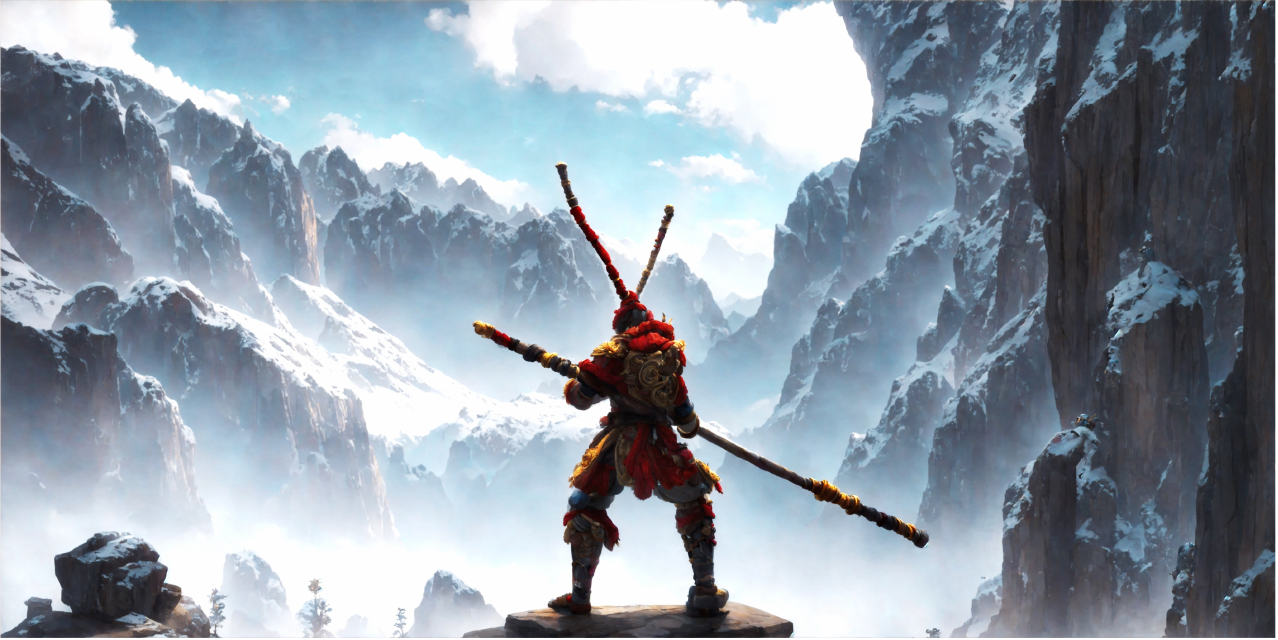}
  \vspace{-0.5cm}
  \caption{A $16384\times8192$-resolution example in the theme of \textit{Black Myth: Wukong} generated by LinFusion on a single GPU with Canny-conditioned ControlNet. The textual prompt is ``\texttt{the back view of the Monkey King holding a rod in hand stands, 16k, high quality, best quality, style of a 3A game, fantastic style}". 
  The original picture and the extracted Canny edge are shown in Fig.~\ref{fig:5}. 
  }
  \label{fig:1}
  }
\end{floatrow}
% \vspace{-0.6cm}
\end{figure}

\begin{abstract}
Modern diffusion models, particularly those utilizing a Transformer-based UNet for denoising, rely heavily on self-attention operations to manage complex spatial relationships, thus achieving impressive generation performance. 
However, this existing paradigm faces significant challenges in generating high-resolution visual content due to its quadratic time and memory complexity with respect to the number of spatial tokens. 
To address this limitation, we aim at a novel linear attention mechanism as an alternative in this paper. 
Specifically, we begin our exploration from recently introduced models with linear complexity, \textit{e.g.}, Mamba2, RWKV6, Gated Linear Attention, \textit{etc}, and identify two key features—attention normalization and non-causal inference—that enhance high-resolution visual generation performance. 
Building on these insights, we introduce a generalized linear attention paradigm, which serves as a low-rank approximation of a wide spectrum of popular linear token mixers. 
To save the training cost and better leverage pre-trained models, we initialize our models and distill the knowledge from pre-trained StableDiffusion (SD). 
We find that the distilled model, termed LinFusion, achieves performance on par with or superior to the original SD after only modest training, while significantly reducing time and memory complexity. 
Extensive experiments on SD-v1.5, SD-v2.1, and SD-XL demonstrate that LinFusion enables satisfactory and efficient zero-shot cross-resolution generation, accommodating ultra-resolution images like 16K on a single GPU. 
Moreover, it is highly compatible with pre-trained SD components and pipelines, such as ControlNet, IP-Adapter, DemoFusion, DistriFusion, \textit{etc}, requiring no adaptation efforts. 
Codes are available \href{https://github.com/Huage001/LinFusion}{here}. 
\end{abstract}

\section{Introduction}

Recent years have witnessed significant advancements in AI-generated content (AIGC) with diffusion models~\cite{croitoru2023diffusion,yang2023diffusion}. 
On the one hand, unlike classic models like GAN~\cite{goodfellow2014generative}, diffusion models refine noise vectors iteratively to produce high-quality results with fine details~\cite{nichol2021improved,dhariwal2021diffusion,rombach2022high,ho2020denoising}. 
On the other hand, having trained on large-scale data pairs, these models exhibit satisfactory alignment between input conditions and output results. 
These capabilities have spurred recent advancements in text-to-image generation~\cite{balaji2022ediffi,ding2022cogview2,nichol2021glide,ramesh2022hierarchical,BetkerImprovingIG,rombach2022high,saharia2022photorealistic}. 
Benefiting from the impressive performance and the open-source community, Stable Diffusion (SD)~\cite{rombach2022high} stands out as one of the most popular models. 
%image-to-image generation~\cite{brooks2023instructpix2pix,Geng23instructdiff}, video generation~\cite{wang2023lavie,he2022lvdm,chen2023videocrafter1,ding2022cogview2,wang2023modelscope,zhang2023i2vgen,guo2023animatediff,blattmann2023stable,xing2023dynamicrafter,chen2023seine}, 3D generation~\cite{shi2023mvdream,voleti2024sv3d,wang2023imagedream,raj2023dreambooth3d,jun2023shap,kant2024spad,long2023wonder3d,li2023instant3d,poole2022dreamfusion}, \textit{etc}. 

The success of models like SD can be largely attributed to their robust backbone structures for denoising. 
From UNet architectures with attention layers~\cite{ronneberger2015u, rombach2022high} to Vision Transformers~\cite{peebles2023scalable,bao2023all,chen2023pixart,esser2024scaling}, existing designs rely heavily on self-attention mechanisms to manage complex relationships between spatial tokens. 
Despite their impressive performance, the quadratic time and memory complexity inherent in self-attention operations poses significant challenges for high-resolution visual generation. 
For instance, as illustrated in Fig.~\ref{fig:2}(a), using FP16 precision, SD-v1.5 fails to generate 2048-resolution images on A100, a GPU with 80GB of memory, due to out-of-memory errors, making higher resolutions or larger models even more problematic\footnote{PyTorch 1.13 is adopted here for evaluation to reflect the theoretical complexity of various architectures. On higher versions of PyTorch, block-wise strategies are applied for memory efficient attention. However, the time efficiency is still a problem.}. 

To address these issues, in this paper, we aim at a novel token-mixing mechanism with linear complexity to the number of spatial tokens, offering an alternative to the classic self-attention approach. 
Inspired by recently introduced models with linear complexity, such as Mamba~\cite{gu2023mamba} and Mamba2~\cite{dao2024transformers}, which have demonstrated significant potential in sequential generation tasks, we first investigate their applicability as token mixers in diffusion models. 

% Specifically, we identify two key drawbacks of such a direct application. 
However, there are two drawbacks of Mamba diffusion models. 
On the one hand, when a diffusion model operates at a resolution different from its training scale, our theoretical analysis reveals that the feature distribution tends to shift, leading to difficulties in cross-resolution inference. 
% On the other hand, unlike general sequence generation, relationships between various spatial locations of feature maps in diffusion maps are non-sequential and non-causal. 
% In other words, it is unreasonable to enforce that earlier tokens should not be impacted by later ones. 
% Even though bi-directional scanning can mitigate this issue to some extent, a sequential approach still overlooks the spatial relationships between locations: in a sequential view, two originally adjacent locations may appear distant from each other depending on the scanning order. 
On the other hand, diffusion models perform a denoising task rather than an auto-regressive task, allowing the model to simultaneously access all noisy spatial tokens and generate denoised tokens based on the entire input. In contrast, Mamba is fundamentally an RNN that processes tokens sequentially, meaning that the generated tokens are conditioned only on preceding tokens, a constraint termed causal restriction. Applying Mamba directly to diffusion models would impose this unnecessary causal restriction on the denoising process, which is both unwarranted and counterproductive. Although bi-directional scanning branches can somewhat alleviate this issue, the problem inevitably persists within each branch. 

% These insights indicate that a distribution normalization scheme similar to the \texttt{SoftMax} function used in the original attention mechanism and a non-causal token mixer are essential for a diffusion model with linear complexity. 
% Thus, we are curious about one question: \emph{what would a normalization-aware and non-causal version of Mamba be like?} \whyu{logic is not smooth here, may consider removing this paragraph}

Focusing on the above drawbacks of Mamba for diffusion models, we propose a generalized linear attention paradigm. 
Firstly, to tackle the distribution shift between training resolution and larger inference resolution, a normalizer for Mamba, defined by the cumulative impact of all tokens on the current token, is devised to the aggregated features, ensuring that the total impact remains consistent regardless of the input scale. 
% In this paper, we answer the question with a generalized linear attention paradigm fulfilling these requirements. 
% First, a normalizer for Mamba, defined by the cumulative impact of all tokens on the current token, is devised to the aggregated features, ensuring that the total impact remains consistent regardless of the input scale. 
Secondly, we aim at a non-causal version of Mamba. 
We start our exploration by simply removing the lower triangular causal mask applied on the forget gate and find that all tokens would end up with identical hidden states, which undermines the model's capacity. 
To address this issue, we introduce distinct groups of forget gates for different tokens and propose an efficient low-rank approximation, enabling the model to be elegantly implemented in a linear-attention form. 
% Specifically, non-linear query and key mappings are employed to better replicate the attention maps produced by the standard SoftMax-based attention operation. 
We analyze the proposed approach technically alongside recently introduced linear-complexity token mixers such as Mamba2~\cite{dao2024transformers}, RWKV6~\cite{peng2024eagle}, and Gated Linear Attention~\cite{yang2023gated} and reveal that our model can be regarded as a generalized non-causal version of these popular models. 

\begin{figure}[!t]
\begin{floatrow}[1]
\figureboxf{}{
  \centering
  \includegraphics[width=0.95\linewidth]{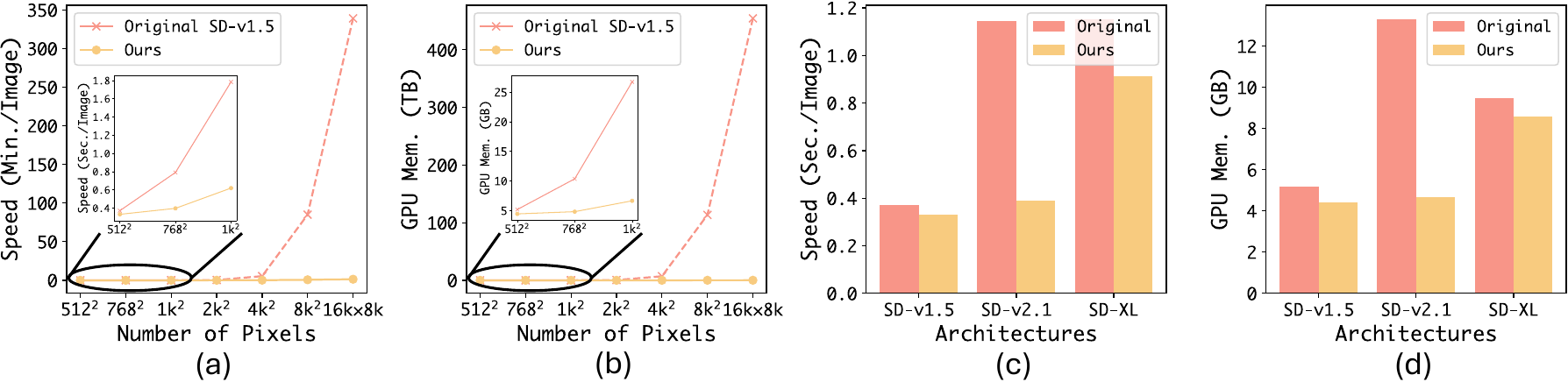}
  \vspace{-0.3cm}
  \caption{(a) and (b): Comparisons of the proposed LinFusion with original SD-v1.5 under various resolutions in terms of generation speed using 8 steps and GPU memory consumption. The dashed lines denote estimated values using quadratic functions due to out-of-memory error. (c) and (d): Efficiency comparisons on various architectures under their default resolutions.}
  \label{fig:2}
  }
\end{floatrow}
\vspace{-0.5cm}
\end{figure}

The proposed generalized linear attention module is integrated into the architectures of SD, replacing the original self-attention layers, and the resultant model is termed as \underline{Lin}ear-Complexity Dif\underline{fusion} Model, or \emph{LinFusion} in short. 
By only training the linear attention modules for 50k iterations in a knowledge distillation framework, LinFusion achieves performance on par with or even superior to the original SD, while significantly reducing time and memory complexity, as shown in Fig.~\ref{fig:2}. 
Meanwhile, it delivers satisfactory zero-shot cross-resolution generation performance and can generate images at 16K resolution on a single GPU. 
It is also compatible with existing components and pipelines for SD, such as ControlNet~\cite{zhang2023adding}, IP-Adapter~\cite{ye2023ip-adapter}, DemoFusion~\cite{du2024demofusion}, DistriFusion~\cite{li2024distrifusion}, \textit{etc}, allowing users to achieve various purposes with the proposed LinFusion flexibly without any additional training cost. 
As shown in Fig.~\ref{fig:1}, extensive experiments on SD-v1.5, SD-v2.1, and SD-XL validate the effectiveness of the proposed model and method. 
Our contributions can be summarized as follows: 
\begin{itemize}
    \item We investigate the non-causal and normalization-aware version of Mamba and propose a novel linear attention mechanism that addresses the challenges of high-resolution visual generation with diffusion models.
    \item Our theoretical analysis indicates that the proposed model is technically a generalized and efficient low-rank approximation of existing popular linear-complexity token mixers.
    \item Extensive experiments on SD demonstrate that the proposed LinFusion can achieve even better results than the original SD and exerts satisfactory zero-shot cross-resolution generation performance and compatibility with existing components and pipelines for SD. To the best of our knowledge, this is the first exploration of linear-complexity token mixers on the SD series model for text-to-image generation. 
\end{itemize}

\section{Methodology}

\subsection{Preliminary}\label{sec:3-1}

\textbf{Diffusion Models.} 
As a popular model for text-to-image generation, Stable Diffusion~\cite{rombach2022high} (SD) first learns an auto-encoder $(\mathcal{E},\mathcal{D})$, where the encoder $\mathcal{E}$ maps an image $x$ to a lower dimensional latent space: $z\leftarrow\mathcal{E}(x)$, and the decoder $\mathcal{D}$ learns to decode $z$ back to the image space $\hat{x}\leftarrow\mathcal{D}(z)$ such that $\hat{x}$ is close to the original $x$. 
In the inference time, a Gaussian noise in the latent space $z_T$ is sampled randomly and denoised by a UNet $\epsilon_\theta$ for $T$ steps. 
The denoised latent code after the final step $z_0$ is decoded by $\mathcal{D}$ to derive a generated image. 
In training, given an image $x$ and its corresponding text description $y$, $\mathcal{E}$ is utilized to obtain its corresponding latent code, and we add a random Gaussian noise $\epsilon$ for its noisy version $z_t$ with respect to the $t$-th step. 
The UNet is trained via the noise prediction loss $\mathcal{L}_{simple}$~\cite{ho2020denoising,nichol2021improved}:
\begin{equation}
    \theta=\argmin_{\theta}\mathbb{E}_{z\sim\mathcal{E}(x),y,\epsilon\sim\mathcal{N}(0,1),t}[\mathcal{L}_{simple}]\quad \mathcal{L}_{simple}=\Vert\epsilon-\epsilon_\theta(z_t,t,y)\Vert_2^2.\label{eq:1}
\end{equation}

The UNet in SD contains multiple self-attention layers as token mixers to handle spatial-wise relationships and multiple cross-attention layers to handle text-image relationships. 
Given an input feature map in the UNet backbone $X\in\mathbb{R}^{n\times d}$ and weight parameters $W_Q,W_K\in\mathbb{R}^{d\times d'}$ and $W_V\in\mathbb{R}^{d\times d}$, where $n$ is the number of spatial tokens, $d$ is the feature dimension, and $d'$ is the attention dimension, self-attention can be formalized as:
\begin{equation}
    Y=MV,\quad M=\mathrm{softmax}(QK^{\top}/\sqrt{d'}), \quad Q=XW_Q,\quad K=XW_K,\quad V=XW_V.\label{eq:2}
\end{equation}
We can observe from Eq.~\ref{eq:2} that the complexity of self-attention is quadratic with respect to $n$ since the attention matrix $M\in\mathbb{R}^{n\times n}$, we mainly focus on its alternatives in this paper and are dedicated on a novel module for token mixing with linear complexity. 

\textbf{Mamba.} 
As an alternative to Transformer~\cite{vaswani2017attention}, Mamba~\cite{gu2023mamba} is proposed to handle sequential tasks with linear complexity with respect to the sequence length. 
At the heart of Mamba lies the State Space Model (SSM), which can be written as:
\begin{equation}
    H_i=A_i\odot H_{i-1}+B_i^\top X_i=\sum_{j=1}^i\{(\prod_{k=j+1}^{i}A_k)\odot (B_j^\top X_j)\},\quad Y_i=C_iH_i,\label{eq:3}
\end{equation}
where $i$ is the index of the current token in a sequence, $H_i$ denotes the hidden state, $X_i$ and $Y_i$ are row vectors denoting the $i$-th rows of the input and output matrices respectively, $A_i$, $B_i$, and $C_i$ are input-dependent variables, and $\odot$ indicates element-wise multiplication.

\subsection{Overview}

\begin{wrapfigure}{r}{4cm}
\begin{floatrow}[1]
\figureboxs{}{
\vspace{-1.2cm}
  \centering
  \includegraphics[width=0.96\linewidth]{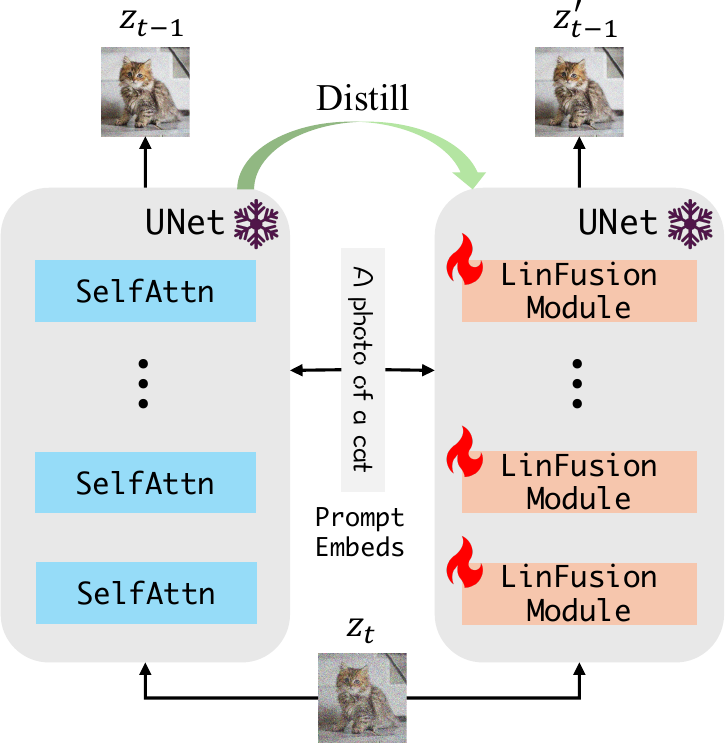}
  \vspace{-0.1cm}
  \caption{Overview of LinFusion. We replace self-attention layers in the original SD with our LinFusion modules and adopt knowledge distillation to optimize the parameters.}
  \label{fig:3}
  }
\end{floatrow}
\vspace{-0.3cm}

\end{wrapfigure}

In the latest version, \textit{i.e.}, Mamba2~\cite{dao2024transformers}, $A_i$ is a scalar, $B_i,C_i\in\mathbb{R}^{1\times d'}$, $X_i,Y_i\in\mathbb{R}^{1\times d}$, and $H_i\in\mathbb{R}^{d'\times d}$. 
According to State-Space Duality (SSD), the computation in Eq.~\ref{eq:3} can be reformulated as the following expression, referred to as 1-Semiseparable Structured Masked Attention:
\begin{equation}
    Y=((CB^\top)\odot \tilde{A})X,\label{eq:4}
\end{equation}
where $\tilde{A}$ is a $n\times n$ lower triangular matrix and $\tilde{A}_{ij}=\prod_{k=j+1}^{i}A_k$ for $j\leq i$. 
Such a matrix $\tilde{A}$ is known as 1-semiseparable, ensuring that Mamba2 can be implemented with linear complexity in $n$. 

In this paper, we aim at a diffusion backbone for the general text-to-image problems with linear complexity with respect to the number of image pixels. 
To this end, instead of training a novel model from scratch, we initialize and distill the model from pre-trained SD. 
Specifically, we utilize the SD-v1.5 model by default and substitute its self-attention—the primary source of quadratic complexity—with our proposed LinFusion modules. 
Only the parameters in these modules are trainable, while the rest of the model remains frozen. 
We distill knowledge from the original SD model into LinFusion such that given the same inputs, their outputs are as close as possible. 
Fig.~\ref{fig:3} provides an overview of this streamline. 

This approach offers two key benefits: (1) Training difficulty and computational overhead are significantly reduced, as the student model only needs to learn spatial relationships, without the added complexity of handling other aspects like text-image alignment; (2) The resulting model is highly compatible with existing components trained on the original SD models and their fine-tuned variations, since we only replace the self-attention layers with LinFusion modules, which are trained to be functionally similar to the original ones while maintaining the overall architecture. 

Technically, to derive a linear-complexity diffusion backbone, one simple solution is to replace all the self-attention blocks with Mamba2, as shown in Fig.~\ref{fig:4}(a). 
We apply bi-directional SSM to ensure that the current position can access information from subsequent positions. 
Moreover, the self-attention modules in Stable Diffusion do not incorporate gated operations~\cite{hochreiter1997long,cho2014learning} or RMS-Norm~\cite{zhang2019root} as used in Mamba2. 
As shown in Fig.~\ref{fig:4}(b), we remove these structures to maintain the consistency and result in a slight improvement in performance. 
In the following parts of this section, we delve into the issues of applying SSM, the core module in Mamba2, to diffusion models and accordingly introduce the key features in LinFusion: normalization and non-causality in Secs.~\ref{sec:3-2} and \ref{sec:3-3} respectively. 
Finally, in Sec.~\ref{sec:3-4}, we provide the training objectives to optimize parameters in LinFusion modules.

\subsection{Normalization-Aware Mamba}\label{sec:3-2}

In practice, we find that SSM-based structure shown in Fig.~\ref{fig:4}(b) can achieve satisfactory performance if the training and inference have consistent image resolutions. 
However, it fails when their image scales are different. 
We refer readers to Sec.~\ref{sec:4-2} for the experimental results. 
To identify the cause of this failure, we examine the channel-wise means of the input and output feature maps, which exhibit the following proposition: 
\begin{prop}\label{prop:1}
Assuming that the mean of the $j$-th channel in the input feature map $X$ is $\mu_j$, and denoting $(CB^\top)\odot\tilde{A}$ as $M$, the mean of this channel in the output feature map $Y$ is $\mu_j\sum_{k=1}^nM_{ik}$. 
\end{prop}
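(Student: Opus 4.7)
The plan is essentially a one-line linearity argument. The only ingredients needed are (i) the closed form $Y = MX$ with $M = (CB^\top)\odot\tilde{A}$ from Eq.~\ref{eq:4}, and (ii) linearity of the expectation / mean operator. So I would first spell out the matrix-vector interpretation of the channel, and then push the scalar $\mu_j$ out of the weighted sum.

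Step 1: Expand the product entrywise. Since $Y = MX$, the $(i,j)$ entry is $Y_{ij} = \sum_{k=1}^{n} M_{ik} X_{kj}$. This already does the key structural decoupling: the $i$-dependence is contained entirely in the row $M_{i\cdot}$ of mixing weights, while the $j$-dependence is contained entirely in the column $X_{\cdot j}$ of the input.

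Step 2: Apply linearity. The statement ``the mean of the $j$-th channel of $X$ is $\mu_j$'' I would interpret in the standard statistical sense $\mathbb{E}[X_{kj}]=\mu_j$ for each $k$ (equivalently, model the entries of a channel as i.i.d.\ with common mean; this matches the way a channel-wise mean is usually spoken of in convolutional/attention feature maps). Then
\begin{equation}
\mathbb{E}[Y_{ij}] \;=\; \sum_{k=1}^{n} M_{ik}\,\mathbb{E}[X_{kj}] \;=\; \mu_j \sum_{k=1}^{n} M_{ik},
\end{equation}
which is exactly the asserted formula. No properties of the particular factorization $M=(CB^\top)\odot\tilde{A}$ are needed beyond it giving a well-defined mixing matrix, so the argument is insensitive to the gating structure of Mamba/Mamba2.

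There is no genuine technical obstacle here; the proposition is a book-keeping identity. What I would be careful about in the write-up is interpretation rather than proof: the ``mean'' in the conclusion is the expected value at each spatial position $i$ (hence the $i$ appearing in $\sum_k M_{ik}$), not the further averaging over $i$. The reason this trivial identity is worth stating is the structural consequence that $\sum_{k=1}^{n} M_{ik}$ scales with $n$, so when the test-time resolution $n$ differs from the training resolution, the output channel means are rescaled; this is exactly what the normalizer introduced in Sec.~\ref{sec:3-2} is designed to cancel.
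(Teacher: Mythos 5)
Your argument is correct and is exactly the one the paper has in mind: the paper omits the proof entirely (``The proof is straightforward''), and the intended content is precisely the expansion $Y_{ij}=\sum_{k=1}^{n}M_{ik}X_{kj}$ followed by linearity, with $M$ treated as a fixed mixing matrix. Your added remark that the free index $i$ forces the ``mean'' to be read position-wise (an expectation at each spatial location, not an average over $i$) is a fair clarification of the statement rather than a deviation from the paper's reasoning.
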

The proof is straightforward. 
We observe through Fig.~\ref{fig:4}(b) that there is non-negative activation applied on $X$, $B$, and $C$. 
Given that $A$ is also non-negative in Mamba2, according to Prop.~\ref{prop:1}, the channel-wise distributions would shift if $n$ is inconsistent in training and inference, which further leads to distorted results. 

Solving this problem requires unifying the impact of all tokens on each one to the same scale, a property inherently provided by the \texttt{Softmax} function. 
In light of this, we propose normalization-aware Mamba in this paper, enforcing that the sum of attention weights from each token equals $1$, \textit{i.e.}, $\sum_{k=1}^nM_{ik}=1$, which is equivalent to applying the SSM module one more time to obtain the normalization factor $Z$:
\begin{equation}
    Z_i=A_i\odot Z_{i-1}+B_i,\quad C'_{ij}=\frac{C_{ij}}{\sum_{k=1}^{d'}\{C_{ik}\odot Z_{ik}\}}.\label{eq:5} 
\end{equation}
The operations are illustrated in Fig.~\ref{fig:4}(c). 
Experiments indicate that such normalization substantially improve the performance of zero-shot cross-resolution generalization. 

\begin{figure}[!t]
\begin{floatrow}[1]
\figureboxf{}{
  \centering
  \includegraphics[width=\linewidth]{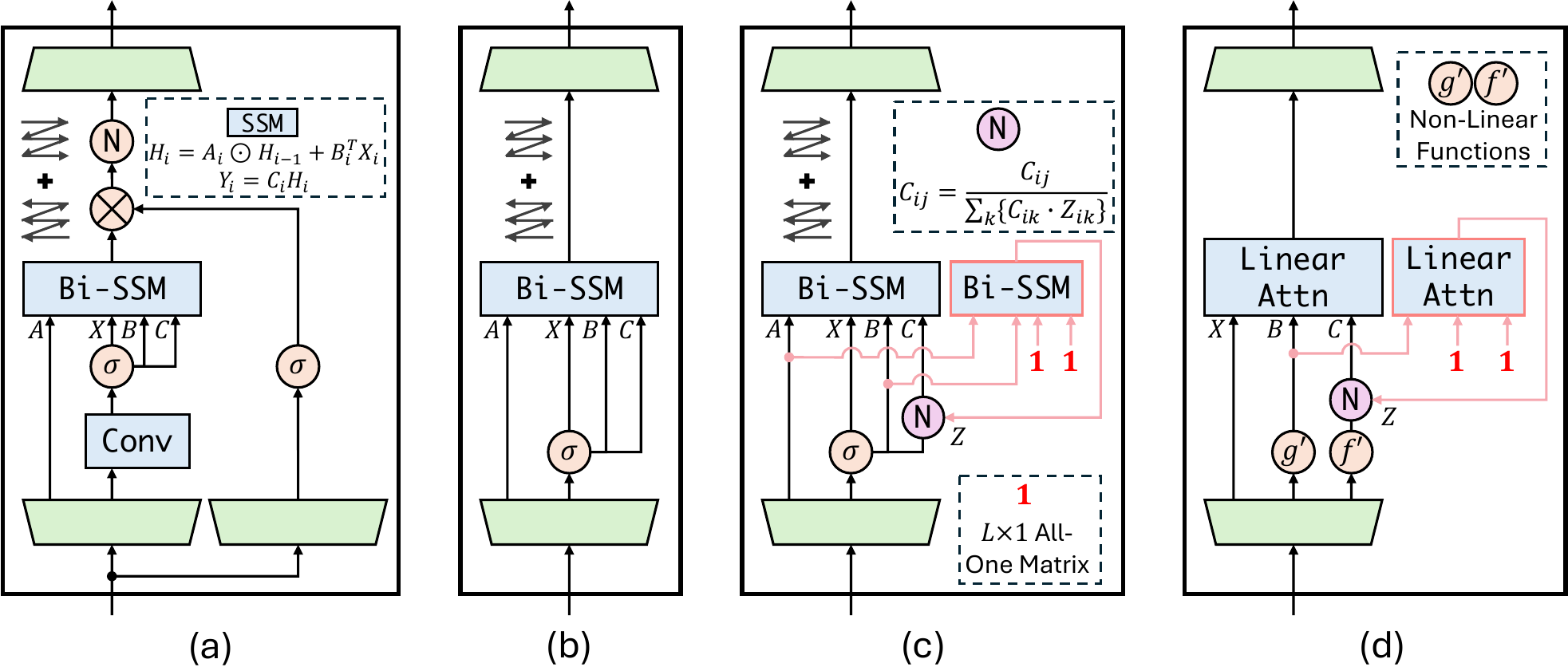}
  \vspace{-0.6cm}
  \caption{(a) The architecture of Mamba2. Bi-directional SSM is additionally involved here. (b) Mamba2 without gating and RMS-Norm. (c) Normalization-aware Mamba2. (d) The proposed LinFusion module with generalized linear attention.}
  % \whyu{Attention already has LN?}}
  \label{fig:4}
  }
\end{floatrow}
\vspace{-0.5cm}
\end{figure}

\subsection{Non-Causal Mamba}\label{sec:3-3}

% Mamba is proposed to handle sequential tasks, where a current token cannot access future tokens during generation. 
% This causal approach fundamentally differs from the way self-attention operates in diffusion backbones. 
While bi-directional scanning enables a token to receive information from subsequent tokens—a crucial feature for diffusion backbones—treating feature maps as 1D sequences compromises the intrinsic spatial structures in 2D images and higher-dimensional visual content. 
To address this dilemma more effectively, we focus on developing a non-causal version of Mamba in this paper. 

% Analyzing through the recursive formula of the hidden state $H_i$ in Eq.~\ref{eq:3}, we derive $H_i=\sum_{j=1}^i\{(\prod_{k=j+1}^{i}A_k)\odot (B_j^\top X_j)\}$. 
Non-causality indicates that one token can access to all tokens for information mixing, which can be achieved by simply removing the lower triangular causal mask applied on $\tilde{A}$. 
Thus, the recursive formula in Eq.~\ref{eq:3} would become $H_i=\sum_{j=1}^n\{(\prod_{k=j+1}^{n}A_k)\odot (B_j^\top X_j)\}$. 
We observe that $H_i$ remains invariant with respect to $i$ in this formula. 
This implies that the hidden states of all tokens are uniform, which fundamentally undermines the intended purpose of the forget gate $A$. 
To address this issue, we associate different groups of $A$ to various input tokens. 
In this case, $A$ is a $n\times n$ matrix and $H_i=\sum_{j=1}^n\{(\prod_{k=j+1}^{n}A_{ik})\odot (B_j^\top X_j)\}$. 
% Denoting $\prod_{k=j+1}^{n}A_{ik}$ as $\tilde{A}_{ij}$, the computation is equivalent to:
The $\tilde{A}_{ij}$ in Eq.~\ref{eq:4} becomes $\prod_{k=j+1}^{n}A_{ik}$. 
% \begin{equation}
%     Y=((CB^\top)\odot \tilde{A})X.\label{eq:6}
% \end{equation}
Compared with that in Eq.~\ref{eq:4}, $\tilde{A}$ here is not necessarily 1-semiseparable. 
To maintain linear complexity, we impose the assumption that $\tilde{A}$ is low-rank separable, \textit{i.e.}, there exist input-dependent matrices $F$ and $G$ such that $\tilde{A}=FG^\top$. 
In this way, the following proposition ensures that Eq.~\ref{eq:4} under this circumstance can be implemented via linear attention:
\begin{prop}\label{prop:2}
    Given that $\tilde{A}=FG^\top$, $F,G\in\mathbb{R}^{n\times r}$, and $B,C\in\mathbb{R}^{n\times d'}$, denoting $C_i=c(X_i)$, $B_i=b(X_i)$, $F_i=f(X_i)$, and $G_i=g(X_i)$, there exist corresponding functions $f'$ and $g'$ such that Eq.~\ref{eq:4} can be equivalently implemented as linear attention, expressed as $Y=f'(X)g'(X)^\top X$. 
\end{prop}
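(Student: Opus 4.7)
\textbf{Proof plan for Proposition~\ref{prop:2}.}
The plan is to expand the $(i,j)$-entry of the matrix $M := (CB^\top)\odot \tilde A$ using both factorizations $(CB^\top)_{ij}=\sum_k C_{ik}B_{jk}$ and $\tilde A_{ij}=\sum_l F_{il}G_{jl}$, and then regroup the double sum so that the factors depending on $i$ are separated from the factors depending on $j$. Concretely, I would write
\begin{equation*}
M_{ij} \;=\; \Bigl(\sum_{k=1}^{d'} C_{ik}B_{jk}\Bigr)\Bigl(\sum_{l=1}^{r} F_{il}G_{jl}\Bigr)
       \;=\; \sum_{k=1}^{d'}\sum_{l=1}^{r} \bigl(C_{ik}F_{il}\bigr)\,\bigl(B_{jk}G_{jl}\bigr),
\end{equation*}
which already exhibits $M$ as an inner product, in a $(d' r)$-dimensional index set $(k,l)$, between a vector that depends only on $i$ and one that depends only on $j$.

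Next I would formalize this by defining, for each token $i$, the feature vectors obtained as the (flattened) Kronecker products
\begin{equation*}
    f'(X_i) \;:=\; c(X_i)\otimes f(X_i)\in\mathbb{R}^{d'r},
    \qquad
    g'(X_i) \;:=\; b(X_i)\otimes g(X_i)\in\mathbb{R}^{d'r},
\end{equation*}
so that $f'(X_i)_{(k,l)}=C_{ik}F_{il}$ and $g'(X_j)_{(k,l)}=B_{jk}G_{jl}$. Stacking these row-wise yields matrices $f'(X),g'(X)\in\mathbb{R}^{n\times d'r}$ and the computation above gives $M = f'(X)\,g'(X)^\top$. Substituting into Eq.~\ref{eq:4} produces
\begin{equation*}
    Y \;=\; M X \;=\; f'(X)\bigl(g'(X)^\top X\bigr),
\end{equation*}
which is exactly the linear-attention form claimed. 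Finally, I would note that evaluating $g'(X)^\top X$ first costs $O(n\,d'r\,d)$, and multiplying the result by $f'(X)$ costs another $O(n\,d'r\,d)$, so the whole computation is linear in the number of tokens $n$, confirming that the low-rank separability assumption $\tilde A=FG^\top$ is exactly what is needed to bypass the quadratic cost of materializing $M$.

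The only nontrivial step is the identification of $f'$ and $g'$ with Kronecker products; everything else is a direct index manipulation. Since $C,B$ and $F,G$ are themselves pointwise functions of $X$ by hypothesis, the composed maps $f'=c\otimes f$ and $g'=b\otimes g$ are well-defined functions of $X$, closing the argument. I do not anticipate a genuine obstacle here: the main thing to be careful about is ordering the pair $(k,l)$ consistently in the two feature maps so that the inner product reproduces $M_{ij}$ exactly, and verifying that the feature dimension $d'r$ stays independent of $n$, which is what preserves linear complexity.
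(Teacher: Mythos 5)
Your argument is correct and is essentially identical to the paper's proof: both expand the entries of $(CB^\top)\odot\tilde{A}$ as a double sum over the index pair $(k,l)$ (the paper's $(u,v)$) and regroup it into an inner product of the Kronecker products $f'(X_i)=c(X_i)\otimes f(X_i)$ and $g'(X_j)=b(X_j)\otimes g(X_j)$. The only addition in your write-up is the explicit complexity accounting, which the paper leaves implicit.
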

The proof can be found in the appendix. 
In practice, we adopt two MLPs to mimic the functionalities of $f'$ and $g'$. 
Combining with the normalization operations mentioned in Sec.~\ref{sec:3-2}, we derive an elegant structure shown in Fig.~\ref{fig:4}(d). 

Not only that, we further demonstrate that the form of linear attention described in Proposition~\ref{prop:2} can be extended to the more general case where $\tilde{A}_{ij}$ is a $d'$-dimension vector rather than a scalar:
\begin{prop}\label{prop:3}
    Given that $\tilde{A}\in\mathbb{R}^{d'\times n\times n}$, if for each $1\leq u\leq d'$, $\tilde{A}_{u}$ is low-rank separable: $\tilde{A}_{u}=F_{u}G^\top_{u}$, where $F_{u},G_{u}\in\mathbb{R}^{n\times r}$, $F_{uiv}=f(X_i)_{uv}$, and $G_{ujv}=g(X_j)_{uv}$, there exist corresponding functions $f'$ and $g'$ such that the computation $Y_i=C_iH_i=C_i\sum_{j=1}^n\{\tilde{A}_{:ij}\odot (B_j^\top X_j)\}$ can be equivalently implemented as linear attention, expressed as $Y_i=f'(X_i)g'(X)^\top X$, where $\tilde{A}_{:ij}$ is a column vector and can broadcast to a $d'\times d$ matrix. 
\end{prop}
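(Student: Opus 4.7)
The plan is to mimic the proof of Proposition~\ref{prop:2} entrywise in the channel index $u$, relying on the fact that channelwise low-rank factorization still factorizes into an $X_i$-part and an $X_j$-part once the sum over $j$ is pushed inside. The entire argument is a direct algebraic manipulation; no new analytic idea is required beyond Proposition~\ref{prop:2}.

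First I would expand the target expression entrywise. Writing $B_j \in \mathbb{R}^{1\times d'}$, $X_j \in \mathbb{R}^{1\times d}$, and $C_i \in \mathbb{R}^{1\times d'}$, the $k$-th component of $Y_i$ becomes $(Y_i)_k = \sum_{u=1}^{d'} C_{iu}\sum_{j=1}^{n} \tilde{A}_{uij}\, B_{ju}\, X_{jk}$, where the broadcasting of $\tilde{A}_{:ij}$ onto a $d'\times d$ matrix simply means that $\tilde{A}_{uij}$ multiplies every column $k$. I would then insert the hypothesized channelwise factorization $\tilde{A}_{uij} = \sum_{v=1}^{r} f(X_i)_{uv}\, g(X_j)_{uv}$, yielding a quadruple sum over $u$, $v$, $j$, $k$.

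Next I would interchange the sum over $j$ with those over $u$ and $v$ to separate $X_i$-dependent factors from $X_j$-dependent factors. Defining the flattened maps $f'(X_i)_{(u,v)} = C_{iu}\, f(X_i)_{uv}$ and $g'(X_j)_{(u,v)} = B_{ju}\, g(X_j)_{uv}$, both valued in $\mathbb{R}^{d'r}$, the expression collapses to $(Y_i)_k = \sum_{(u,v)} f'(X_i)_{(u,v)} \sum_{j} g'(X_j)_{(u,v)}\, X_{jk}$. Stacking the $g'(X_j)$ into a matrix $g'(X)\in\mathbb{R}^{n\times d'r}$, this is precisely $Y_i = f'(X_i)\, g'(X)^\top X$, which is the claimed linear-attention form. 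Two MLPs can be used in practice to realize $f'$ and $g'$, just as in Proposition~\ref{prop:2}.

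The one point requiring care is the bookkeeping of indices, specifically confirming that the broadcasting convention for $\tilde{A}_{:ij}$ is consistent with treating $\tilde{A}_{uij}$ as a scalar gate for channel $u$, and recognizing that the effective hidden dimension of the linear attention grows from $r$ (as in Proposition~\ref{prop:2}) to $d'r$ once we allow a separate rank-$r$ factorization per channel. This growth is the genuine cost of the generalization, but it does not affect the linear scaling in $n$, which was the goal. I do not anticipate any real obstacle beyond keeping the indices straight; the argument is essentially a channelwise repetition of the earlier proof followed by an index-flattening step.
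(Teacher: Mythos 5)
Your proof is correct and follows essentially the same route as the paper's: expand $Y_i$ entrywise, substitute the per-channel rank-$r$ factorization $\tilde{A}_{uij}=\sum_{v}f(X_i)_{uv}g(X_j)_{uv}$, swap the sum over $j$ outward, and absorb the $(u,v)$ double index by flattening, so that $f'(X_i)$ and $g'(X_j)$ are the $d'r$-dimensional vectors $[C_{iu}f(X_i)_{uv}]_{(u,v)}$ and $[B_{ju}g(X_j)_{uv}]_{(u,v)}$ — exactly the paper's $\mathrm{vec}(c(X_i)\cdot f(X_i))$ and $\mathrm{vec}(b(X_j)\cdot g(X_j))$. Your added remark that the effective attention dimension grows from $r$ to $d'r$ is a correct observation the paper leaves implicit, but the argument itself is the same.
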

The proof is provided in the appendix. 
From this point of view, the proposed structure can be deemed as a generalized linear attention and a non-causal form of recent linear-complexity sequential models, including Mamba2~\cite{dao2024transformers}, RWKV6~\cite{peng2024eagle}, GLA~\cite{yang2023gated}, \textit{etc}. 
In Tab.~\ref{tab:0}, we provide a summary of the parameterization in recent works for $A_i$. 
% We offer a thorough discussion in the appendix. 

\begin{wraptable}{r}{9cm}
\begin{floatrow}[1]
  \tableboxf{}{
  \vspace{-0.9cm}
\scriptsize
\setlength{\tabcolsep}{2pt}
\centering
\begin{tabular}{ccc}
\hline
Model & Parameterization of $A_i$ & Causal \\ \hline
Mamba2~\cite{dao2024transformers} & $A_i\in\mathbb{R}$ & Yes \\
mLSTM~\cite{beck2024xlstm,peng2021random} & $A_i\in\mathbb{R}$ & Yes \\
Gated Retention~\cite{sun2024you} & $A_i\in\mathbb{R}$ & Yes \\ 
GateLoop~\cite{katsch2023gateloop} & $A_i\in\mathbb{R}^{d'}$ & Yes \\ 
HGRN2~\cite{qin2024hgrn2} & $A_i\in\mathbb{R}^{d'}$ & Yes \\ 
RWKV6~\cite{peng2024eagle} & $A_i\in\mathbb{R}^{d'}$ & Yes \\
Gated Linear Attention~\cite{yang2023gated} & $A_i\in\mathbb{R}^{d'}$ & Yes \\ \hline
MLLA~\cite{han2024demystify} & $A_{ij}=1$  & No \\ 
VSSD~\cite{shi2024vssd} & $A_{ij}\in\mathbb{R}$ & No \\
Generalized Linear Attention & $\tilde{A}_{ij}\in\mathbb{R}^{d'}$ & No \\ \hline
\end{tabular}
\caption{A summary of the parameterization in recent linear token mixers for $A_i$, partially adapted from \cite{yang2023gated}.}
\label{tab:0}
  }
\end{floatrow}
% \vspace{-0.7cm}

\end{wraptable}

\subsection{Training Objectives}\label{sec:3-4}

In this paper, we replace all self-attention layers in the original SD with LinFusion modules. 
Only the parameters within these modules are trained, while all others remain frozen. 
To ensure that LinFusion closely mimics the original functionality of self-attention, we augment the standard noise prediction loss $\mathcal{L}_{simple}$ in Eq.~\ref{eq:1} with additional losses. 
Specifically, we introduce a knowledge distillation loss $\mathcal{L}_{kd}$ to align the final outputs of the student and teacher models, and a feature matching loss $\mathcal{L}_{feat}$ to match the outputs of each LinFusion module and the corresponding self-attention layer. 
The training objectives can be written as:
\begin{equation}
\begin{aligned}
    \theta=\argmin_{\theta}&\mathbb{E}_{z\sim\mathcal{E}(x),y,\epsilon\sim\mathcal{N}(0,1),t}[\mathcal{L}_{simple}+\alpha\mathcal{L}_{kd}+\beta\mathcal{L}_{feat}],\\
    \mathcal{L}_{kd}=\Vert\epsilon_{\theta}(z_t,t,y)-\epsilon_{\theta_{org}}(z_t,&t,y)\Vert_2^2,\quad
    \mathcal{L}_{feat}=\frac{1}{L}\sum_{l=1}^{L}\Vert\epsilon_{\theta}^{(l)}(z_t,t,y)-\epsilon_{\theta_{org}}^{(l)}(z_t,t,y)\Vert_2^2,\label{eq:8}
\end{aligned}
\end{equation}
where $\alpha$ and $\beta$ are hyper-parameters controlling the weights of the respective loss terms, $\theta_{org}$ represents parameters of the original SD, $L$ is the number of LinFusion/self-attention modules, and the superscript $^{(l)}$ refers to the output of the $l$-th one in the diffusion backbone. 

\begin{figure}[!t]
\begin{floatrow}[1]
\figureboxf{}{
  \centering
  \includegraphics[width=0.975\linewidth]{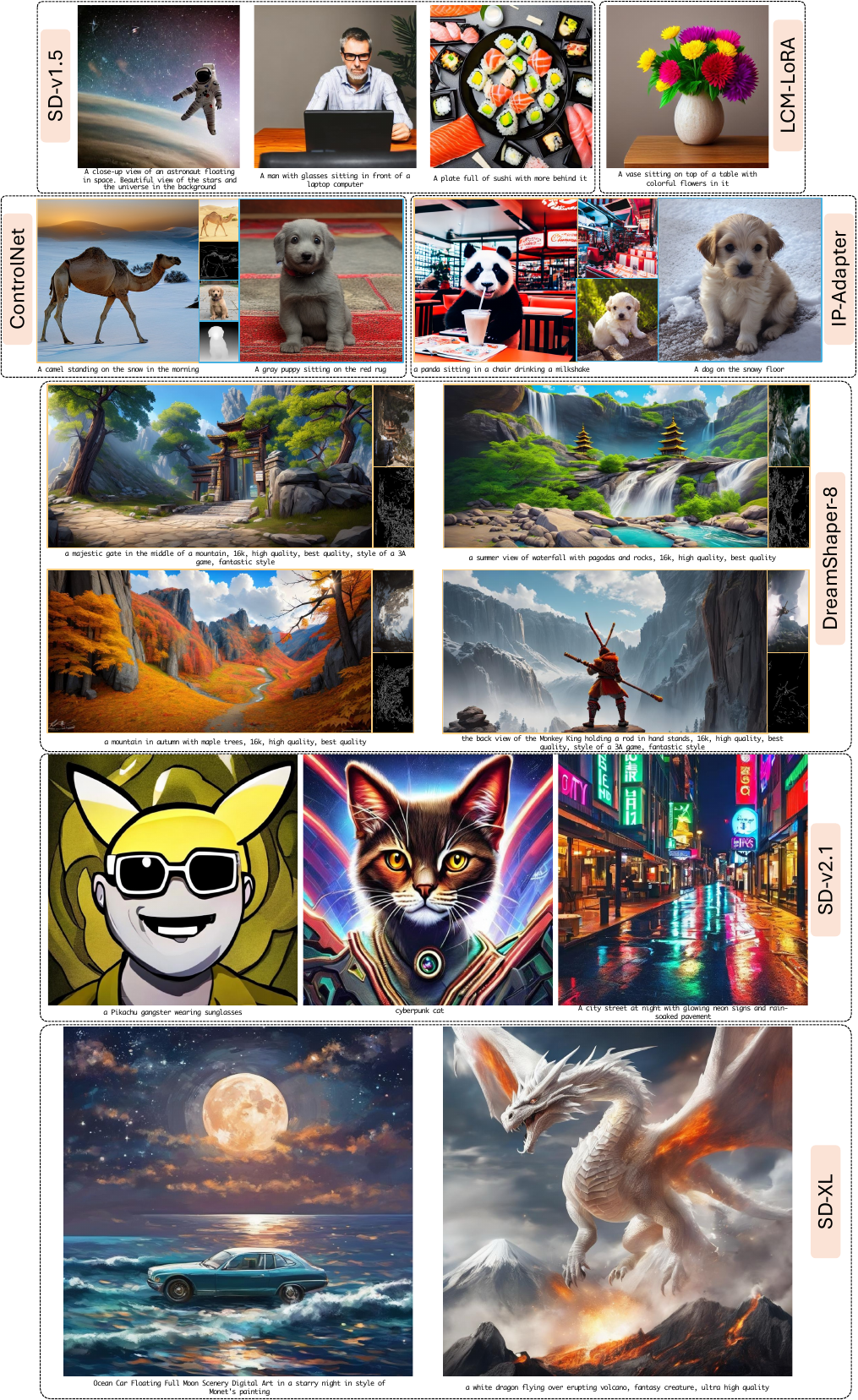}
  \vspace{-0.3cm}
  \caption{Qualitative text-to-image results by LinFusion based on various architectures.}
  \label{fig:5}
  }
\end{floatrow}
\vspace{-0.5cm}
\end{figure}

% \begin{figure}[h]
%   \centering
%   \includegraphics[width=0.8\textwidth]{figures/selected.pdf}
%   \vspace{-0.3cm}
%   \caption{Qualitative text-to-image results by LinFusion based on various architectures.}
%   \label{fig:5}
% \vspace{-0.5cm}
% \end{figure}

\section{Experiments}

\subsection{Implementation Details}\label{sec:4-1}

\begin{table}[!t]
\begin{floatrow}[1]
\tableboxf{}{
\scriptsize
\centering
\begin{tabular}{cc|cccc}
ID & Setting                                       & FID($\downarrow$)            & CLIP-T($\uparrow$)         & GPU Memory (GB) & Running Time (sec./image) \\ \hline
A  & Original SD (v1.5)                            & 12.86          & 0.321          &     5.17            &   2.32                        \\ \hline
B  & Distilled Diffusion Model (Base)              & 16.63          & 0.315          &     4.62            &       1.58                    \\
C  & Distilled Diffusion Model (Small)             & 18.58          & 0.297          &     4.45            &      1.44                     \\
D  & Distilled Diffusion Model (Tiny)              & 18.82          & 0.295          &     4.13            &      1.32                     \\ \hline
E  & Bi-Directional Mamba2                          & 18.90          & 0.307          &   4.70             &    4.54                       \\
F  & E - Gating - RMS Norm                         & 17.30          & 0.309          &    4.69             &    4.33                       \\
G  & F + Normalization                             & 17.60          & 0.308          &    4.73             &    6.51                       \\
H  & G - SSM + Linear Attn.                        & 17.63          & 0.307          &    4.09             &     2.07                   \\ \hline
I  & G - SSM + Generalized Linear Attn.            & 17.07          & 0.309          &    4.43             &    2.07                       \\
J  & I + $\mathcal{L}_{kd}$ + $\mathcal{L}_{feat}$ & \textbf{12.57} & \textbf{0.323} &    4.43             &    2.07                       \\ \bottomrule
\end{tabular}
\vspace{-0.2cm}
\caption{Performance and efficiency comparisons with various baselines on the COCO benchmark.}
\label{tab:1}
}
\vspace{-0.5cm}
\end{floatrow}
\end{table}

\begin{figure}[!t]
\begin{floatrow}[1]
\figureboxf{}{

  \centering
  \includegraphics[width=0.8\linewidth]{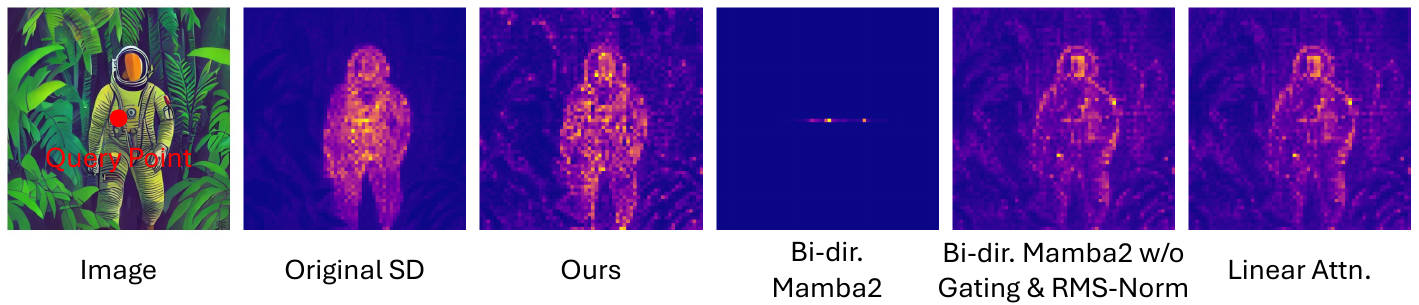}
  \vspace{-0.2cm}
  \caption{Visualization of attention maps by various architectures. The prompt is ``\texttt{Astronaut in a jungle, cold color palette, muted colors, detailed, 8k}".}
  \label{fig:6}
  }
  \end{floatrow}
\vspace{-0.7cm}
\end{figure}

We present qualitative results on SD-v1.5, SD-v2.1, and SD-XL in Fig.~\ref{fig:5} 
% More result can be found in the appendix and
and mainly conduct experiments on SD-v1.5 in this section. 
There are 16 self-attention layers in SD-v1.5 and we replace them with LinFusion modules proposed in this paper. 
Functions $f'$ and $g'$ mentioned in Proposition~\ref{prop:2} are implemented as MLP, which consists of a linear branch and a non-linear branch with one \texttt{Linear-LayerNorm-LeakyReLU} block. 
Their results are added to form the outputs of $f'$ and $g'$. 
The parameters of the linear branch in $f'$ and $g'$ are initialized as $W_Q$ and $W_K$ respectively, while the outputs of the non-linear branch are initialized as $0$. 
We use only 169k images in LAION~\cite{schuhmann2022laion} with aesthetics scores larger than $6.5$ for training and adopt the BLIP2~\cite{li2023blip} image captioning model to regenerate the textual descriptions. 
Both hyper-parameters, $\alpha$ and $\beta$, are set as $0.5$, following the approach taken in \cite{kim2023bk}, which also focuses on architectural distillation of SD. 
The model is optimized using AdamW~\cite{loshchilov2017decoupled} with a learning rate of $10^{-4}$. 
Training is conducted on 8 RTX6000Ada GPUs with a total batch size of 96 under $512\times512$ resolution for 100k iterations, requiring $\sim1$ day to complete. 
The efficiency evaluations are conducted on a single NVIDIA A100-SXM4-80GB GPU. 

\subsection{Main Results}\label{sec:4-2}

\textbf{Ablation Studies.} 
To demonstrate the effectiveness of the proposed LinFusion, we report the comparison results with alternative solutions such as those shown in Fig.~\ref{fig:4}(a), (b) and (c). 
We follow the convention in previous works focusing on text-to-image generation~\cite{kang2023gigagan} and conduct quantitative evaluation on the COCO benchmark~\cite{lin2014microsoft} containing 30k text prompts. 
The metrics are FID~\cite{heusel2017gans} against the COCO2014 test dataset and the cosine similarity in the CLIP-ViT-G feature space~\cite{radford2021learning}. 
We also report the running time per image with 50 denoising steps and the GPU memory consumption during inference for efficiency comparisons. 
Results under $512\times512$ resolution are shown in Tab.~\ref{tab:1}. 

\textbf{Mitigating Structural Difference.} 
We begin our exploration from the original Mamba2 structure~\cite{dao2024transformers} with bi-directional scanning, \textit{i.e.}, Fig.~\ref{fig:4}(a), and try removing the gating and RMS-Norm, \textit{i.e.}, Fig.~\ref{fig:4}(b), to maintain a consistent holistic structure with the self-attention layer in the original SD. 
In this way, the only difference with the original SD lies on the SSM or self-attention for token mixing. 
We observe that such structural alignment is beneficial for the performance.

\textbf{Normalization and Non-Causality.} 
We then apply the proposed normalization operation and the non-causal treatment sequentially, corresponding to Fig.~\ref{fig:4}(c) and (d). 
Although results in Tab.~\ref{tab:1} indicate that normalization would slightly hurt the performance, we will show in the following Tab.~\ref{tab:2} that it is crucial for generating images with resolutions unseen during training. 
Further adding the proposed non-causal treatment, we obtain results better than Fig.~\ref{fig:4}(b). 

We also compare the proposed non-causal operation with the simplified case mentioned in Sec.~\ref{sec:3-3}, achieved by directly removing the lower triangular causal mask applied on $\tilde{A}$, which results in a $1$-rank matrix, \textit{i.e.}, various tokens share the same group of forget gates. 
The inferior results demonstrate the effectiveness of the proposed generalized linear attention. 

\begin{figure}[!t]
\begin{floatrow}[2]

\tablebox{\caption{Normalization is crucial for cross-resolution generation as demonstrated by the results on the COCO benchmark under $1024\times1024$ resolution, which is unseen in training.}}{
    \centering
\scriptsize
\setlength{\tabcolsep}{2pt}
\begin{tabular}{c|cc}
Setting                                                                               & FID($\downarrow$)    & CLIP-T($\uparrow$) \\ \hline
Original SD (v1.5)                                                                    & 32.71  & 0.290  \\ \hline
Bi-Directional Mamba2                                                                 & 196.72 & 0.080  \\
+Normalization                                                                        & 37.02  & 0.273  \\ \hline
\begin{tabular}[c]{@{}l@{}}Bi-Directional Mamba\\ w/o Gating \& RMS-Norm\end{tabular} & 134.78 & 0.158  \\
+Normalization                                                                        & 50.30  & 0.263  \\ \hline
Generalized Linear Attention                                                          & 359.64 & 0.069  \\
+Normalization                                                                        & \textbf{36.33}  & \textbf{0.285}  \\ \bottomrule
\end{tabular}
    \label{tab:2}
}

\figureboxl{\caption{Qualitative studies of normalization on various architectures. The  resolution is $4096\times512$ and the prompt is ``\texttt{A group of golden retriever puppies playing in snow. Their heads pop out of the snow covered in}".}}{
    \includegraphics[width=0.75\linewidth]{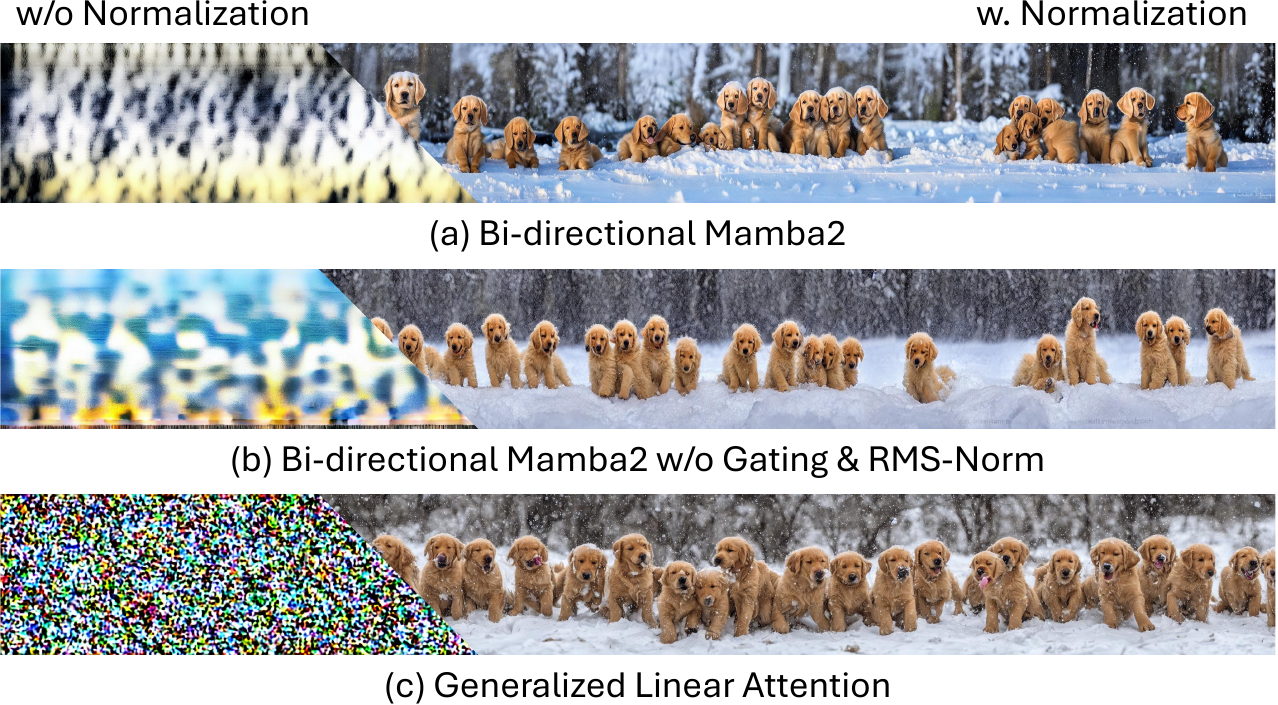}
    \label{fig:7}
    \vspace{-0.2cm}
}

\end{floatrow}
\vspace{-0.8cm}
\end{figure}

\textbf{Attention Visualization.} 
In Fig.~\ref{fig:6}, we visualize the self-attention maps yielded by various methods, including the original SD, bi-directional SSM, linear attention with shared forget gates, and generalized linear attention in LinFusion. 
Results indicate that our method works better for capturing broader-range of spatial dependency and best matches the predictions of the original SD. 

\textbf{Knowledge Distillation and Feature Matching.} 
We finally apply loss terms $\mathcal{L}_{kd}$ and $\mathcal{L}_{feat}$ in Eq.~\ref{eq:8}, which enhance the performance further and even surpass the SD teacher. 

\textbf{Cross-Resolution Inference.} 
It is desirable for diffusion model to generate images of unseen resolutions during training--a feature of the original SD. 
Since modules other than LinFusion are pre-trained and fixed in our work, normalization is a key component for this feature to maintain consistent feature distributions for training and inference. 
We report the results of $1024\times1024$ resolution in Tab.~\ref{tab:2}, which indicate that the conclusion holds for all the basic structures such as Mamba2, Mamba2 without gating and RMS-Norm, and the proposed generalized linear attention. 
Fig.~\ref{fig:7} shows a qualitative example, where results without normalization are meaningless. 

\subsection{Empirical Extensions}\label{sec:4-3}

The proposed LinFusion is highly compatible with various components/pipelines for SD, such as ControlNet~\cite{zhang2023adding}, IP-Adapter~\cite{ye2023ip-adapter}, LoRA~\cite{hu2022lora}, DemoFusion~\cite{du2024demofusion}, DistriFusion~\cite{li2024distrifusion}, \textit{etc}, without any further training or adaptation. 
We present some qualitative results in Fig.~\ref{fig:5} and refer readers to the appendix for more results. 
The overall performance of LinFusion is comparable with the original SD. 

\textbf{ControlNet.} 
ControlNet~\cite{zhang2023adding} introduces plug-and-play components to SD for additional conditions, such as edge, depth, and semantic map. 
We substitute SD with the proposed LinFusion and compare the FID, CLIP score, and the similarity between the input conditions and the extracted conditions from generated images of diffusion models with original SD. 
The results are shown in Tab.~\ref{tab:3}. 

\textbf{IP-Adapter.} 
Personalized text-to-image generation~\cite{gal2022image} is a popular application of SD, which focuses on generating images simultaneously following both input identities and textual descriptions. 
IP-Adapter~\cite{ye2023ip-adapter} offers a zero-shot solution that trains a mapper from the image space to the condition space of SD, so that it can handle both image and text conditions. 
We demonstrate that IP-Adapter trained on SD can be used directly on LinFusion. 
The performance on the DreamBooth dataset~\cite{ruiz2023dreambooth}, containing 30 identities and 25 text prompts to form 750 test cases in total, is shown in Tab.~\ref{tab:4}. 
We use 5 random seeds for each case and report the averaged CLIP image similarity, DINO~\cite{caron2021emerging} image similarity, and CLIP text similarity. 

\textbf{LoRA.} 
Low-rank adapters (LoRA)~\cite{hu2022lora} aim at low-rank matrices applied on the weights of a basic model such that it can be adapted for different tasks or purposes. 
For instance, \cite{luo2023lcm} introduce LCM-LoRA such that the pre-trained SD can be used for LCM inference with only a few denoising steps~\cite{luo2023latent}. 
Here, we directly apply LoRA in LCM-LoRA model to LinFusion. 
The performance on the COCO benchmark is shown in Tab.~\ref{tab:5}. 

\textbf{Ultrahigh-Resolution Generation.} 
As discussed in \cite{huang2024fouriscale,he2023scalecrafter}, directly applying diffusion models trained on low resolutions for higher-resolution generation can result in content distortion and duplication. 
A series of works are dedicated for higher-resolution image generation by leveraging off-the-shelf diffusion models~\cite{du2024demofusion,lin2024cutdiffusion,lin2024accdiffusion,haji2024elasticdiffusion}. 
However, limited by the quadratic-complexity self-attention, when applied for ultrahigh-resolution generation, existing approaches turn to patch-wise strategies to overcome the heavy computation burden~\cite{bar2023multidiffusion}, which leads to inferior results, as shown in Settings A and B of Tab.~\ref{tab:7}. 

Complementary to these methods, LinFusion addresses the computational overhead via generalized linear attention. 
As shown in Settings C and D of Tab.~\ref{tab:7}, LinFusion achieves $\sim2\times$ acceleration under $2048\times2048$ resolution. 
Instead of going through full denoising steps in the original DemoFusion~\cite{du2024demofusion}, tricks in SDEdit~\cite{meng2021sdedit} are additionally applied here so that the former 60\% steps are skipped, which further enhances the efficiency without scarifying the quality. 
Please refer to the appendix for more analysis. 
Backed up by the linear-complexity LinFusion, such strategies enable ultrahigh-resolution generation up to 16K on a single GPU as shown in Fig.~\ref{fig:1}. 

\textbf{Distributed Parallel Inference.} 
LinFusion is friendly for distributed parallel inference benefiting from its linear complexity, given that the communication cost is constant with respect to image resolution. 
Specifically, unlike the original DistriFusion~\cite{li2024distrifusion} requiring transmitting all the key and value tokens for self-attention communication, the transmission in LinFusion is $g'(X)^\top X\in\mathbb{R}^{c'\times c}$, which is not related with the number of image tokens. 
In consequence, as shown in Tab.~\ref{tab:6}, LinFusion does not require NVLink hardware to achieve satisfactory acceleration. 
Please refer to the appendix for qualitative examples. 

\section{Conclusion}

This paper introduces a diffusion backbone termed LinFusion for text-to-image generation with linear complexity in the number of pixels. 
At the heart of LinFusion lies in a generalized linear attention mechanism, distinguished by its normalization-aware and non-causal operations—key aspects overlooked by recent linear-complexity token mixers like Mamba, Mamba2, and GLA. 
We reveal theoretically that the proposed paradigm serves as a general low-rank approximation for the non-causal variants of recent models. 
Based on Stable Diffusion (SD), LinFusion modules after knowledge distillation can seamlessly replace self-attention layers in the original model, ensuring that LinFusion is highly compatible to existing components or pipelines for Stable Diffusion, like ControlNet, IP-Adapter, LoRA, DemoFusion, DistriFusion, \textit{etc}, without any further training effort. 
Extensive experiments on SD-v1.5, SD-v2.1, and SD-XL demonstrate that the proposed model outperforms existing baselines and achieves performance on par with, or better than, the original SD with significantly reduced computational overhead. 
On a single GPU, it can accommodate image generation with resolutions up to 16K.

\begin{figure}[!t]
\begin{floatrow}[2]
\tableboxll{\caption{Results of ControlNet on the original SD-v1.5 and LinFusion.}}{
    \centering
    \scriptsize
    \setlength{\tabcolsep}{2pt}
\begin{tabular}{c|cc|cc}
Type               & \multicolumn{2}{c}{Canny Edge}      & \multicolumn{2}{c}{Depth}               \\ 
Method             & F1($\uparrow$) & CLIP-T($\uparrow$) & RMSE($\downarrow$) & CLIP-T($\uparrow$) \\ \midrule 
Original SD (v1.5) & 0.210          & 0.296              & \textbf{9.364}     & \textbf{0.300}     \\
LinFusion        & \textbf{0.247} & \textbf{0.303}     & 9.460              & 0.294              \\
\bottomrule
\end{tabular}
    \label{tab:3}
}
\tableboxs{\caption{Results of LinFusion on pipelines dedicated for high-resolution generation.}}{
    \centering
    \scriptsize
    \setlength{\tabcolsep}{2pt}
\begin{tabular}{cc|ccc}
ID & Setting       & FID($\downarrow$)            & CLIP-T($\uparrow$)         & Time($\downarrow$) (sec.)    \\ \hline
A  & DemoFusion    & 70.01          & 0.343          & 61.36          \\
B  & A - Patch     & 65.44          & 0.340          & 57.56          \\
C  & B + SDEdit    & 65.15          & \textbf{0.344} & 26.98          \\
D  & C + LinFusion & \textbf{65.07} & 0.338          & \textbf{14.71} \\ \bottomrule
\end{tabular}
    \label{tab:7}
}

\end{floatrow}
\vspace{-0.3cm}
\end{figure}

\begin{figure}[!t]
\begin{floatrow}[2]
\tableboxl{\caption{Results of IP-Adapter on the original SD-v1.5 and LinFusion.}}{
    \centering
    \scriptsize
    \begin{tabular}{c|ccc}
    Method &   CLIP-T($\uparrow$)  & CLIP-I($\uparrow$)  & DINO($\uparrow$) \\ \midrule 
    Original SD (v1.5) & \textbf{0.281} & 0.841 & 0.731 \\
    LinFusion & 0.280 & \textbf{0.846} & \textbf{0.740}  \\ 
    \bottomrule
\end{tabular}
    \label{tab:4}
}
\tableboxl{\caption{Results of LCM-LoRA on the original SD-v1.5 and LinFusion.}}{
    \centering
    \scriptsize
    \begin{tabular}{c|cc}
    Method &   FID($\downarrow$) & CLIP-T($\uparrow$) \\ \midrule 
    Original SD (v1.5) & \textbf{23.43} & \textbf{0.297} \\
    LinFusion & 27.14 & 0.294 \\ 
    \bottomrule
\end{tabular}
    \label{tab:5}
}

\end{floatrow}
\vspace{-0.2cm}
\end{figure}

\begin{table}[!t]
\begin{floatrow}[1]
\tableboxf{}{
\scriptsize
\centering
\begin{tabular}{cccccccc}
                 & \multicolumn{2}{c}{Against Ground-Truth}  & \multicolumn{3}{c}{Against 1-GPU Results}                & \multirow{2}{*}{\begin{tabular}[c]{@{}c@{}}Time($\downarrow$)\\ (sec.)\end{tabular}} & \multirow{2}{*}{Speedup($\uparrow$)} \\ \cline{2-3} \cline{4-6}
Setting          & LPIPS($\downarrow$)          & FID($\downarrow$)            & PSNR($\uparrow$)           & LPIPS($\downarrow$)          & FID($\downarrow$)           &                                                                        &                          \\ \hline
SD-XL 1 GPU      & 0.797          & \textbf{23.96} & -              & -              & -             & 6.51                                                                   & -                        \\
w. LinFusion 1 GPU  & \textbf{0.794} & 24.85          & -              & -              & -             & \textbf{6.49}                                                          & -                        \\ \hline
DistriFusion 2 GPUs     & 0.797          & \textbf{24.18} & 24.63          & 0.146          & 4.87          & 5.36                                                                   & 1.21                     \\
w. LinFusion 2 GPUs & \textbf{0.795} & 24.96          & \textbf{26.45} & \textbf{0.113} & \textbf{4.09} & \textbf{3.85}                                                          & \textbf{1.69}            \\ \hline
DistriFusion 4 GPUs     & 0.798          & \textbf{24.22} & 23.05          & 0.183          & 5.77          & 4.22                                                                   & 1.54                     \\
w. LinFusion 4 GPUs & \textbf{0.796} & 25.00          & \textbf{24.63} & \textbf{0.148} & \textbf{5.08} & \textbf{2.51}                                                          & \textbf{2.59}            \\ \hline
DistriFusion 8 GPUs     & 0.799          & \textbf{24.40} & 22.04          & 0.211          & \textbf{6.45} & 4.37                                                                   & 1.49                     \\
w. LinFusion 8 GPUs & \textbf{0.797} & 24.97          & \textbf{22.93} & \textbf{0.198} & 6.61          & \textbf{2.14}                                                          & \textbf{3.03}            \\ \bottomrule
\end{tabular}
\vspace{-0.2cm}
\caption{Results of distributed parallel inference on a server with 8 RTX 4090 D GPUs. Benefiting from its linear complexity and constant communication cost among various patches, LinFusion is readily for distributed parallel inference with multiple GPUs. Compared with DistriFusion, it achieves more significant acceleration even \textbf{without} NVLink.}
\label{tab:6}
}
\end{floatrow}
\vspace{-0.7cm}
\end{table}

\bibliography{main}

\begin{thebibliography}{75}
\providecommand{\natexlab}[1]{#1}
\providecommand{\url}[1]{\texttt{#1}}
\expandafter\ifx\csname urlstyle\endcsname\relax
  \providecommand{\doi}[1]{doi: #1}\else
  \providecommand{\doi}{doi: \begingroup \urlstyle{rm}\Url}\fi

\bibitem[Balaji et~al.(2022)Balaji, Nah, Huang, Vahdat, Song, Kreis, Aittala, Aila, Laine, Catanzaro, et~al.]{balaji2022ediffi}
Yogesh Balaji, Seungjun Nah, Xun Huang, Arash Vahdat, Jiaming Song, Karsten Kreis, Miika Aittala, Timo Aila, Samuli Laine, Bryan Catanzaro, et~al.
\newblock ediffi: Text-to-image diffusion models with an ensemble of expert denoisers.
\newblock \emph{arXiv preprint arXiv:2211.01324}, 2022.

\bibitem[Bao et~al.(2023)Bao, Nie, Xue, Cao, Li, Su, and Zhu]{bao2023all}
Fan Bao, Shen Nie, Kaiwen Xue, Yue Cao, Chongxuan Li, Hang Su, and Jun Zhu.
\newblock All are worth words: A vit backbone for diffusion models.
\newblock In \emph{Proceedings of the IEEE/CVF conference on computer vision and pattern recognition}, pp.\  22669--22679, 2023.

\bibitem[Bar-Tal et~al.(2023)Bar-Tal, Yariv, Lipman, and Dekel]{bar2023multidiffusion}
Omer Bar-Tal, Lior Yariv, Yaron Lipman, and Tali Dekel.
\newblock Multidiffusion: Fusing diffusion paths for controlled image generation.
\newblock In \emph{International Conference on Machine Learning}, pp.\  1737--1752. PMLR, 2023.

\bibitem[Beck et~al.(2024)Beck, P{\"o}ppel, Spanring, Auer, Prudnikova, Kopp, Klambauer, Brandstetter, and Hochreiter]{beck2024xlstm}
Maximilian Beck, Korbinian P{\"o}ppel, Markus Spanring, Andreas Auer, Oleksandra Prudnikova, Michael Kopp, G{\"u}nter Klambauer, Johannes Brandstetter, and Sepp Hochreiter.
\newblock xlstm: Extended long short-term memory.
\newblock \emph{arXiv preprint arXiv:2405.04517}, 2024.

\bibitem[Betker et~al.(2023)Betker, Goh, Jing, TimBrooks, Wang, Li, LongOuyang, JuntangZhuang, JoyceLee, YufeiGuo, WesamManassra, PrafullaDhariwal, CaseyChu, YunxinJiao, and Ramesh]{BetkerImprovingIG}
James Betker, Gabriel Goh, Li~Jing, TimBrooks, Jianfeng Wang, Linjie Li, LongOuyang, JuntangZhuang, JoyceLee, YufeiGuo, WesamManassra, PrafullaDhariwal, CaseyChu, YunxinJiao, and Aditya Ramesh.
\newblock Improving image generation with better captions.
\newblock 2023.
\newblock URL \url{https://api.semanticscholar.org/CorpusID:264403242}.

\bibitem[Caron et~al.(2021)Caron, Touvron, Misra, J\'egou, Mairal, Bojanowski, and Joulin]{caron2021emerging}
Mathilde Caron, Hugo Touvron, Ishan Misra, Herv\'e J\'egou, Julien Mairal, Piotr Bojanowski, and Armand Joulin.
\newblock Emerging properties in self-supervised vision transformers.
\newblock In \emph{Proceedings of the International Conference on Computer Vision (ICCV)}, 2021.

\bibitem[Chen et~al.(2023)Chen, Yu, Ge, Yao, Xie, Wu, Wang, Kwok, Luo, Lu, et~al.]{chen2023pixart}
Junsong Chen, Jincheng Yu, Chongjian Ge, Lewei Yao, Enze Xie, Yue Wu, Zhongdao Wang, James Kwok, Ping Luo, Huchuan Lu, et~al.
\newblock Pixart-$\alpha$: Fast training of diffusion transformer for photorealistic text-to-image synthesis.
\newblock \emph{arXiv preprint arXiv:2310.00426}, 2023.

\bibitem[Cho(2014)]{cho2014learning}
Kyunghyun Cho.
\newblock Learning phrase representations using rnn encoder--decoder for statistical machine translation.
\newblock \emph{arXiv preprint arXiv:1406.1078}, 2014.

\bibitem[Croitoru et~al.(2023)Croitoru, Hondru, Ionescu, and Shah]{croitoru2023diffusion}
Florinel-Alin Croitoru, Vlad Hondru, Radu~Tudor Ionescu, and Mubarak Shah.
\newblock Diffusion models in vision: A survey.
\newblock \emph{IEEE Transactions on Pattern Analysis and Machine Intelligence}, 2023.

\bibitem[Dao \& Gu(2024)Dao and Gu]{dao2024transformers}
Tri Dao and Albert Gu.
\newblock Transformers are ssms: Generalized models and efficient algorithms through structured state space duality.
\newblock \emph{arXiv preprint arXiv:2405.21060}, 2024.

\bibitem[Dhariwal \& Nichol(2021)Dhariwal and Nichol]{dhariwal2021diffusion}
Prafulla Dhariwal and Alexander Nichol.
\newblock Diffusion models beat gans on image synthesis.
\newblock \emph{Advances in neural information processing systems}, 34:\penalty0 8780--8794, 2021.

\bibitem[Ding et~al.(2022)Ding, Zheng, Hong, and Tang]{ding2022cogview2}
Ming Ding, Wendi Zheng, Wenyi Hong, and Jie Tang.
\newblock Cogview2: Faster and better text-to-image generation via hierarchical transformers.
\newblock \emph{Advances in Neural Information Processing Systems}, 35:\penalty0 16890--16902, 2022.

\bibitem[Du et~al.(2024)Du, Chang, Hospedales, Song, and Ma]{du2024demofusion}
Ruoyi Du, Dongliang Chang, Timothy Hospedales, Yi-Zhe Song, and Zhanyu Ma.
\newblock Demofusion: Democratising high-resolution image generation with no \$\$\$.
\newblock In \emph{Proceedings of the IEEE/CVF Conference on Computer Vision and Pattern Recognition}, pp.\  6159--6168, 2024.

\bibitem[Esser et~al.(2024)Esser, Kulal, Blattmann, Entezari, M{\"u}ller, Saini, Levi, Lorenz, Sauer, Boesel, et~al.]{esser2024scaling}
Patrick Esser, Sumith Kulal, Andreas Blattmann, Rahim Entezari, Jonas M{\"u}ller, Harry Saini, Yam Levi, Dominik Lorenz, Axel Sauer, Frederic Boesel, et~al.
\newblock Scaling rectified flow transformers for high-resolution image synthesis.
\newblock In \emph{Forty-first International Conference on Machine Learning}, 2024.

\bibitem[Fei et~al.(2024{\natexlab{a}})Fei, Fan, Yu, and Huang]{fei2024scalable}
Zhengcong Fei, Mingyuan Fan, Changqian Yu, and Junshi Huang.
\newblock Scalable diffusion models with state space backbone.
\newblock \emph{arXiv preprint arXiv:2402.05608}, 2024{\natexlab{a}}.

\bibitem[Fei et~al.(2024{\natexlab{b}})Fei, Fan, Yu, Li, and Huang]{fei2024diffusion}
Zhengcong Fei, Mingyuan Fan, Changqian Yu, Debang Li, and Junshi Huang.
\newblock Diffusion-rwkv: Scaling rwkv-like architectures for diffusion models.
\newblock \emph{arXiv preprint arXiv:2404.04478}, 2024{\natexlab{b}}.

\bibitem[Gal et~al.(2022)Gal, Alaluf, Atzmon, Patashnik, Bermano, Chechik, and Cohen-Or]{gal2022image}
Rinon Gal, Yuval Alaluf, Yuval Atzmon, Or~Patashnik, Amit~H Bermano, Gal Chechik, and Daniel Cohen-Or.
\newblock An image is worth one word: Personalizing text-to-image generation using textual inversion.
\newblock \emph{arXiv preprint arXiv:2208.01618}, 2022.

\bibitem[Goodfellow et~al.(2014)Goodfellow, Pouget-Abadie, Mirza, Xu, Warde-Farley, Ozair, Courville, and Bengio]{goodfellow2014generative}
Ian Goodfellow, Jean Pouget-Abadie, Mehdi Mirza, Bing Xu, David Warde-Farley, Sherjil Ozair, Aaron Courville, and Yoshua Bengio.
\newblock Generative adversarial nets.
\newblock \emph{Advances in neural information processing systems}, 27, 2014.

\bibitem[Gu \& Dao(2023)Gu and Dao]{gu2023mamba}
Albert Gu and Tri Dao.
\newblock Mamba: Linear-time sequence modeling with selective state spaces.
\newblock \emph{arXiv preprint arXiv:2312.00752}, 2023.

\bibitem[Gu et~al.(2021)Gu, Goel, and R{\'e}]{gu2021efficiently}
Albert Gu, Karan Goel, and Christopher R{\'e}.
\newblock Efficiently modeling long sequences with structured state spaces.
\newblock \emph{arXiv preprint arXiv:2111.00396}, 2021.

\bibitem[Haji-Ali et~al.(2024)Haji-Ali, Balakrishnan, and Ordonez]{haji2024elasticdiffusion}
Moayed Haji-Ali, Guha Balakrishnan, and Vicente Ordonez.
\newblock Elasticdiffusion: Training-free arbitrary size image generation through global-local content separation.
\newblock In \emph{Proceedings of the IEEE/CVF Conference on Computer Vision and Pattern Recognition}, pp.\  6603--6612, 2024.

\bibitem[Han et~al.(2024)Han, Wang, Xia, Han, Pu, Ge, Song, Song, Zheng, and Huang]{han2024demystify}
Dongchen Han, Ziyi Wang, Zhuofan Xia, Yizeng Han, Yifan Pu, Chunjiang Ge, Jun Song, Shiji Song, Bo~Zheng, and Gao Huang.
\newblock Demystify mamba in vision: A linear attention perspective.
\newblock \emph{arXiv preprint arXiv:2405.16605}, 2024.

\bibitem[He et~al.(2024)He, Yang, Chen, Cun, Xia, Zhang, Wang, He, Chen, and Shan]{he2023scalecrafter}
Yingqing He, Shaoshu Yang, Haoxin Chen, Xiaodong Cun, Menghan Xia, Yong Zhang, Xintao Wang, Ran He, Qifeng Chen, and Ying Shan.
\newblock Scalecrafter: Tuning-free higher-resolution visual generation with diffusion models.
\newblock In \emph{The Twelfth International Conference on Learning Representations}, 2024.

\bibitem[Heusel et~al.(2017)Heusel, Ramsauer, Unterthiner, Nessler, and Hochreiter]{heusel2017gans}
Martin Heusel, Hubert Ramsauer, Thomas Unterthiner, Bernhard Nessler, and Sepp Hochreiter.
\newblock Gans trained by a two time-scale update rule converge to a local nash equilibrium.
\newblock \emph{Advances in neural information processing systems}, 30, 2017.

\bibitem[Ho et~al.(2020)Ho, Jain, and Abbeel]{ho2020denoising}
Jonathan Ho, Ajay Jain, and Pieter Abbeel.
\newblock Denoising diffusion probabilistic models.
\newblock \emph{Advances in neural information processing systems}, 33:\penalty0 6840--6851, 2020.

\bibitem[Hochreiter \& Schmidhuber(1997)Hochreiter and Schmidhuber]{hochreiter1997long}
Sepp Hochreiter and J{\"u}rgen Schmidhuber.
\newblock Long short-term memory.
\newblock \emph{Neural computation}, 9\penalty0 (8):\penalty0 1735--1780, 1997.

\bibitem[Hu et~al.(2022)Hu, Shen, Wallis, Allen-Zhu, Li, Wang, Wang, and Chen]{hu2022lora}
Edward~J Hu, Yelong Shen, Phillip Wallis, Zeyuan Allen-Zhu, Yuanzhi Li, Shean Wang, Lu~Wang, and Weizhu Chen.
\newblock Lo{RA}: Low-rank adaptation of large language models.
\newblock In \emph{International Conference on Learning Representations}, 2022.
\newblock URL \url{https://openreview.net/forum?id=nZeVKeeFYf9}.

\bibitem[Hu et~al.(2024)Hu, Baumann, Gui, Grebenkova, Ma, Fischer, and Ommer]{hu2024zigma}
Vincent~Tao Hu, Stefan~Andreas Baumann, Ming Gui, Olga Grebenkova, Pingchuan Ma, Johannes~S Fischer, and Bj{\"o}rn Ommer.
\newblock Zigma: A dit-style zigzag mamba diffusion model.
\newblock \emph{arXiv preprint arXiv:2403.13802}, 2024.

\bibitem[Huang et~al.(2024)Huang, Fang, Zhang, Song, Liu, Liu, and Li]{huang2024fouriscale}
Linjiang Huang, Rongyao Fang, Aiping Zhang, Guanglu Song, Si~Liu, Yu~Liu, and Hongsheng Li.
\newblock Fouriscale: A frequency perspective on training-free high-resolution image synthesis.
\newblock \emph{arXiv preprint arXiv:2403.12963}, 2024.

\bibitem[Kang et~al.(2023)Kang, Zhu, Zhang, Park, Shechtman, Paris, and Park]{kang2023gigagan}
Minguk Kang, Jun-Yan Zhu, Richard Zhang, Jaesik Park, Eli Shechtman, Sylvain Paris, and Taesung Park.
\newblock Scaling up gans for text-to-image synthesis.
\newblock In \emph{Proceedings of the IEEE Conference on Computer Vision and Pattern Recognition (CVPR)}, 2023.

\bibitem[Katharopoulos et~al.(2020)Katharopoulos, Vyas, Pappas, and Fleuret]{katharopoulos2020transformers}
Angelos Katharopoulos, Apoorv Vyas, Nikolaos Pappas, and Fran{\c{c}}ois Fleuret.
\newblock Transformers are rnns: Fast autoregressive transformers with linear attention.
\newblock In \emph{International conference on machine learning}, pp.\  5156--5165. PMLR, 2020.

\bibitem[Katsch(2023)]{katsch2023gateloop}
Tobias Katsch.
\newblock Gateloop: Fully data-controlled linear recurrence for sequence modeling.
\newblock \emph{arXiv preprint arXiv:2311.01927}, 2023.

\bibitem[Kim et~al.(2023{\natexlab{a}})Kim, Song, Castells, and Choi]{kim2023bk}
Bo-Kyeong Kim, Hyoung-Kyu Song, Thibault Castells, and Shinkook Choi.
\newblock Bk-sdm: Architecturally compressed stable diffusion for efficient text-to-image generation.
\newblock In \emph{Workshop on Efficient Systems for Foundation Models@ ICML2023}, 2023{\natexlab{a}}.

\bibitem[Kim et~al.(2023{\natexlab{b}})Kim, Lai, Liao, Murata, Takida, Uesaka, He, Mitsufuji, and Ermon]{kim2023consistency}
Dongjun Kim, Chieh-Hsin Lai, Wei-Hsiang Liao, Naoki Murata, Yuhta Takida, Toshimitsu Uesaka, Yutong He, Yuki Mitsufuji, and Stefano Ermon.
\newblock Consistency trajectory models: Learning probability flow ode trajectory of diffusion.
\newblock \emph{arXiv preprint arXiv:2310.02279}, 2023{\natexlab{b}}.

\bibitem[Li et~al.(2023)Li, Li, Savarese, and Hoi]{li2023blip}
Junnan Li, Dongxu Li, Silvio Savarese, and Steven Hoi.
\newblock Blip-2: Bootstrapping language-image pre-training with frozen image encoders and large language models.
\newblock In \emph{International conference on machine learning}, pp.\  19730--19742. PMLR, 2023.

\bibitem[Li et~al.(2024)Li, Cai, Cao, Zhang, Cai, Bai, Jia, Li, and Han]{li2024distrifusion}
Muyang Li, Tianle Cai, Jiaxin Cao, Qinsheng Zhang, Han Cai, Junjie Bai, Yangqing Jia, Kai Li, and Song Han.
\newblock Distrifusion: Distributed parallel inference for high-resolution diffusion models.
\newblock In \emph{Proceedings of the IEEE/CVF Conference on Computer Vision and Pattern Recognition}, pp.\  7183--7193, 2024.

\bibitem[Lin et~al.(2024{\natexlab{a}})Lin, Lin, Zhan, Cao, and Ji]{lin2024cutdiffusion}
Mingbao Lin, Zhihang Lin, Wengyi Zhan, Liujuan Cao, and Rongrong Ji.
\newblock Cutdiffusion: A simple, fast, cheap, and strong diffusion extrapolation method.
\newblock \emph{arXiv preprint arXiv:2404.15141}, 2024{\natexlab{a}}.

\bibitem[Lin et~al.(2014)Lin, Maire, Belongie, Hays, Perona, Ramanan, Doll{\'a}r, and Zitnick]{lin2014microsoft}
Tsung-Yi Lin, Michael Maire, Serge Belongie, James Hays, Pietro Perona, Deva Ramanan, Piotr Doll{\'a}r, and C~Lawrence Zitnick.
\newblock Microsoft coco: Common objects in context.
\newblock In \emph{Computer Vision--ECCV 2014: 13th European Conference, Zurich, Switzerland, September 6-12, 2014, Proceedings, Part V 13}, pp.\  740--755. Springer, 2014.

\bibitem[Lin et~al.(2024{\natexlab{b}})Lin, Lin, Zhao, and Ji]{lin2024accdiffusion}
Zhihang Lin, Mingbao Lin, Meng Zhao, and Rongrong Ji.
\newblock Accdiffusion: An accurate method for higher-resolution image generation.
\newblock \emph{arXiv preprint arXiv:2407.10738}, 2024{\natexlab{b}}.

\bibitem[Loshchilov \& Hutter(2017)Loshchilov and Hutter]{loshchilov2017decoupled}
Ilya Loshchilov and Frank Hutter.
\newblock Decoupled weight decay regularization.
\newblock \emph{arXiv preprint arXiv:1711.05101}, 2017.

\bibitem[Luo et~al.(2023{\natexlab{a}})Luo, Tan, Huang, Li, and Zhao]{luo2023latent}
Simian Luo, Yiqin Tan, Longbo Huang, Jian Li, and Hang Zhao.
\newblock Latent consistency models: Synthesizing high-resolution images with few-step inference.
\newblock \emph{arXiv preprint arXiv:2310.04378}, 2023{\natexlab{a}}.

\bibitem[Luo et~al.(2023{\natexlab{b}})Luo, Tan, Patil, Gu, von Platen, Passos, Huang, Li, and Zhao]{luo2023lcm}
Simian Luo, Yiqin Tan, Suraj Patil, Daniel Gu, Patrick von Platen, Apolin{\'a}rio Passos, Longbo Huang, Jian Li, and Hang Zhao.
\newblock Lcm-lora: A universal stable-diffusion acceleration module.
\newblock \emph{arXiv preprint arXiv:2311.05556}, 2023{\natexlab{b}}.

\bibitem[Ma et~al.(2024)Ma, Fang, and Wang]{ma2024deepcache}
Xinyin Ma, Gongfan Fang, and Xinchao Wang.
\newblock Deepcache: Accelerating diffusion models for free.
\newblock In \emph{Proceedings of the IEEE/CVF Conference on Computer Vision and Pattern Recognition}, pp.\  15762--15772, 2024.

\bibitem[Mao(2022)]{mao2022fine}
Huanru~Henry Mao.
\newblock Fine-tuning pre-trained transformers into decaying fast weights.
\newblock \emph{arXiv preprint arXiv:2210.04243}, 2022.

\bibitem[Meng et~al.(2021)Meng, He, Song, Song, Wu, Zhu, and Ermon]{meng2021sdedit}
Chenlin Meng, Yutong He, Yang Song, Jiaming Song, Jiajun Wu, Jun-Yan Zhu, and Stefano Ermon.
\newblock Sdedit: Guided image synthesis and editing with stochastic differential equations.
\newblock \emph{arXiv preprint arXiv:2108.01073}, 2021.

\bibitem[Nichol et~al.(2021)Nichol, Dhariwal, Ramesh, Shyam, Mishkin, McGrew, Sutskever, and Chen]{nichol2021glide}
Alex Nichol, Prafulla Dhariwal, Aditya Ramesh, Pranav Shyam, Pamela Mishkin, Bob McGrew, Ilya Sutskever, and Mark Chen.
\newblock Glide: Towards photorealistic image generation and editing with text-guided diffusion models.
\newblock \emph{arXiv preprint arXiv:2112.10741}, 2021.

\bibitem[Nichol \& Dhariwal(2021)Nichol and Dhariwal]{nichol2021improved}
Alexander~Quinn Nichol and Prafulla Dhariwal.
\newblock Improved denoising diffusion probabilistic models.
\newblock In \emph{International Conference on Machine Learning}, pp.\  8162--8171. PMLR, 2021.

\bibitem[Peebles \& Xie(2022)Peebles and Xie]{Peebles2022DiT}
William Peebles and Saining Xie.
\newblock Scalable diffusion models with transformers.
\newblock \emph{arXiv preprint arXiv:2212.09748}, 2022.

\bibitem[Peebles \& Xie(2023)Peebles and Xie]{peebles2023scalable}
William Peebles and Saining Xie.
\newblock Scalable diffusion models with transformers.
\newblock In \emph{Proceedings of the IEEE/CVF International Conference on Computer Vision}, pp.\  4195--4205, 2023.

\bibitem[Peng et~al.(2023)Peng, Alcaide, Anthony, Albalak, Arcadinho, Biderman, Cao, Cheng, Chung, Grella, et~al.]{peng2023rwkv}
Bo~Peng, Eric Alcaide, Quentin Anthony, Alon Albalak, Samuel Arcadinho, Stella Biderman, Huanqi Cao, Xin Cheng, Michael Chung, Matteo Grella, et~al.
\newblock Rwkv: Reinventing rnns for the transformer era.
\newblock \emph{arXiv preprint arXiv:2305.13048}, 2023.

\bibitem[Peng et~al.(2024)Peng, Goldstein, Anthony, Albalak, Alcaide, Biderman, Cheah, Ferdinan, Hou, Kazienko, et~al.]{peng2024eagle}
Bo~Peng, Daniel Goldstein, Quentin Anthony, Alon Albalak, Eric Alcaide, Stella Biderman, Eugene Cheah, Teddy Ferdinan, Haowen Hou, Przemys{\l}aw Kazienko, et~al.
\newblock Eagle and finch: Rwkv with matrix-valued states and dynamic recurrence.
\newblock \emph{arXiv preprint arXiv:2404.05892}, 2024.

\bibitem[Peng et~al.(2021)Peng, Pappas, Yogatama, Schwartz, Smith, and Kong]{peng2021random}
Hao Peng, Nikolaos Pappas, Dani Yogatama, Roy Schwartz, Noah~A Smith, and Lingpeng Kong.
\newblock Random feature attention.
\newblock \emph{arXiv preprint arXiv:2103.02143}, 2021.

\bibitem[Podell et~al.(2023)Podell, English, Lacey, Blattmann, Dockhorn, M{\"u}ller, Penna, and Rombach]{podell2023sdxl}
Dustin Podell, Zion English, Kyle Lacey, Andreas Blattmann, Tim Dockhorn, Jonas M{\"u}ller, Joe Penna, and Robin Rombach.
\newblock Sdxl: Improving latent diffusion models for high-resolution image synthesis.
\newblock \emph{arXiv preprint arXiv:2307.01952}, 2023.

\bibitem[Pramanik et~al.(2023)Pramanik, Elelimy, Machado, and White]{pramanik2023recurrent}
Subhojeet Pramanik, Esraa Elelimy, Marlos~C Machado, and Adam White.
\newblock Recurrent linear transformers.
\newblock \emph{arXiv preprint arXiv:2310.15719}, 2023.

\bibitem[Qin et~al.(2024)Qin, Yang, Sun, Shen, Li, Sun, and Zhong]{qin2024hgrn2}
Zhen Qin, Songlin Yang, Weixuan Sun, Xuyang Shen, Dong Li, Weigao Sun, and Yiran Zhong.
\newblock Hgrn2: Gated linear rnns with state expansion.
\newblock \emph{arXiv preprint arXiv:2404.07904}, 2024.

\bibitem[Radford et~al.(2021)Radford, Kim, Hallacy, Ramesh, Goh, Agarwal, Sastry, Askell, Mishkin, Clark, et~al.]{radford2021learning}
Alec Radford, Jong~Wook Kim, Chris Hallacy, Aditya Ramesh, Gabriel Goh, Sandhini Agarwal, Girish Sastry, Amanda Askell, Pamela Mishkin, Jack Clark, et~al.
\newblock Learning transferable visual models from natural language supervision.
\newblock In \emph{International conference on machine learning}, pp.\  8748--8763. PMLR, 2021.

\bibitem[Ramesh et~al.(2022)Ramesh, Dhariwal, Nichol, Chu, and Chen]{ramesh2022hierarchical}
Aditya Ramesh, Prafulla Dhariwal, Alex Nichol, Casey Chu, and Mark Chen.
\newblock Hierarchical text-conditional image generation with clip latents.
\newblock \emph{arXiv preprint arXiv:2204.06125}, 1\penalty0 (2):\penalty0 3, 2022.

\bibitem[Rombach et~al.(2022)Rombach, Blattmann, Lorenz, Esser, and Ommer]{rombach2022high}
Robin Rombach, Andreas Blattmann, Dominik Lorenz, Patrick Esser, and Bj{\"o}rn Ommer.
\newblock High-resolution image synthesis with latent diffusion models.
\newblock In \emph{Proceedings of the IEEE/CVF conference on computer vision and pattern recognition}, pp.\  10684--10695, 2022.

\bibitem[Ronneberger et~al.(2015)Ronneberger, Fischer, and Brox]{ronneberger2015u}
Olaf Ronneberger, Philipp Fischer, and Thomas Brox.
\newblock U-net: Convolutional networks for biomedical image segmentation.
\newblock In \emph{Medical image computing and computer-assisted intervention--MICCAI 2015: 18th international conference, Munich, Germany, October 5-9, 2015, proceedings, part III 18}, pp.\  234--241. Springer, 2015.

\bibitem[Ruiz et~al.(2023)Ruiz, Li, Jampani, Pritch, Rubinstein, and Aberman]{ruiz2023dreambooth}
Nataniel Ruiz, Yuanzhen Li, Varun Jampani, Yael Pritch, Michael Rubinstein, and Kfir Aberman.
\newblock Dreambooth: Fine tuning text-to-image diffusion models for subject-driven generation.
\newblock In \emph{Proceedings of the IEEE/CVF Conference on Computer Vision and Pattern Recognition}, pp.\  22500--22510, 2023.

\bibitem[Saharia et~al.(2022)Saharia, Chan, Saxena, Li, Whang, Denton, Ghasemipour, Gontijo~Lopes, Karagol~Ayan, Salimans, et~al.]{saharia2022photorealistic}
Chitwan Saharia, William Chan, Saurabh Saxena, Lala Li, Jay Whang, Emily~L Denton, Kamyar Ghasemipour, Raphael Gontijo~Lopes, Burcu Karagol~Ayan, Tim Salimans, et~al.
\newblock Photorealistic text-to-image diffusion models with deep language understanding.
\newblock \emph{Advances in Neural Information Processing Systems}, 35:\penalty0 36479--36494, 2022.

\bibitem[Schuhmann et~al.(2022)Schuhmann, Beaumont, Vencu, Gordon, Wightman, Cherti, Coombes, Katta, Mullis, Wortsman, et~al.]{schuhmann2022laion}
Christoph Schuhmann, Romain Beaumont, Richard Vencu, Cade Gordon, Ross Wightman, Mehdi Cherti, Theo Coombes, Aarush Katta, Clayton Mullis, Mitchell Wortsman, et~al.
\newblock Laion-5b: An open large-scale dataset for training next generation image-text models.
\newblock \emph{Advances in Neural Information Processing Systems}, 35:\penalty0 25278--25294, 2022.

\bibitem[Shi et~al.(2024)Shi, Dong, Li, and Xu]{shi2024vssd}
Yuheng Shi, Minjing Dong, Mingjia Li, and Chang Xu.
\newblock Vssd: Vision mamba with non-casual state space duality.
\newblock \emph{arXiv preprint arXiv:2407.18559}, 2024.

\bibitem[Song et~al.(2023)Song, Dhariwal, Chen, and Sutskever]{song2023consistency}
Yang Song, Prafulla Dhariwal, Mark Chen, and Ilya Sutskever.
\newblock Consistency models.
\newblock \emph{arXiv preprint arXiv:2303.01469}, 2023.

\bibitem[Sun et~al.(2024)Sun, Dong, Zhu, Huang, Wang, Ma, Zhang, Wang, and Wei]{sun2024you}
Yutao Sun, Li~Dong, Yi~Zhu, Shaohan Huang, Wenhui Wang, Shuming Ma, Quanlu Zhang, Jianyong Wang, and Furu Wei.
\newblock You only cache once: Decoder-decoder architectures for language models.
\newblock \emph{arXiv preprint arXiv:2405.05254}, 2024.

\bibitem[Teng et~al.(2024)Teng, Wu, Shi, Ning, Dai, Wang, Li, and Liu]{teng2024dim}
Yao Teng, Yue Wu, Han Shi, Xuefei Ning, Guohao Dai, Yu~Wang, Zhenguo Li, and Xihui Liu.
\newblock Dim: Diffusion mamba for efficient high-resolution image synthesis.
\newblock \emph{arXiv preprint arXiv:2405.14224}, 2024.

\bibitem[Vaswani et~al.(2017)Vaswani, Shazeer, Parmar, Uszkoreit, Jones, Gomez, Kaiser, and Polosukhin]{vaswani2017attention}
Ashish Vaswani, Noam Shazeer, Niki Parmar, Jakob Uszkoreit, Llion Jones, Aidan~N Gomez, {\L}ukasz Kaiser, and Illia Polosukhin.
\newblock Attention is all you need.
\newblock \emph{Advances in neural information processing systems}, 30, 2017.

\bibitem[Yan et~al.(2024)Yan, Gu, and Rush]{yan2024diffusion}
Jing~Nathan Yan, Jiatao Gu, and Alexander~M Rush.
\newblock Diffusion models without attention.
\newblock In \emph{Proceedings of the IEEE/CVF Conference on Computer Vision and Pattern Recognition}, pp.\  8239--8249, 2024.

\bibitem[Yang et~al.(2023{\natexlab{a}})Yang, Zhang, Song, Hong, Xu, Zhao, Zhang, Cui, and Yang]{yang2023diffusion}
Ling Yang, Zhilong Zhang, Yang Song, Shenda Hong, Runsheng Xu, Yue Zhao, Wentao Zhang, Bin Cui, and Ming-Hsuan Yang.
\newblock Diffusion models: A comprehensive survey of methods and applications.
\newblock \emph{ACM Computing Surveys}, 56\penalty0 (4):\penalty0 1--39, 2023{\natexlab{a}}.

\bibitem[Yang et~al.(2023{\natexlab{b}})Yang, Wang, Shen, Panda, and Kim]{yang2023gated}
Songlin Yang, Bailin Wang, Yikang Shen, Rameswar Panda, and Yoon Kim.
\newblock Gated linear attention transformers with hardware-efficient training.
\newblock \emph{arXiv preprint arXiv:2312.06635}, 2023{\natexlab{b}}.

\bibitem[Ye et~al.(2023)Ye, Zhang, Liu, Han, and Yang]{ye2023ip-adapter}
Hu~Ye, Jun Zhang, Sibo Liu, Xiao Han, and Wei Yang.
\newblock Ip-adapter: Text compatible image prompt adapter for text-to-image diffusion models.
\newblock 2023.

\bibitem[Zhang \& Sennrich(2019)Zhang and Sennrich]{zhang2019root}
Biao Zhang and Rico Sennrich.
\newblock Root mean square layer normalization.
\newblock \emph{Advances in Neural Information Processing Systems}, 32, 2019.

\bibitem[Zhang et~al.(2023)Zhang, Rao, and Agrawala]{zhang2023adding}
Lvmin Zhang, Anyi Rao, and Maneesh Agrawala.
\newblock Adding conditional control to text-to-image diffusion models.
\newblock In \emph{Proceedings of the IEEE/CVF International Conference on Computer Vision}, pp.\  3836--3847, 2023.

\bibitem[Zhou et~al.(2024)Zhou, Zheng, Wang, Yin, and Huang]{zhou2024score}
Mingyuan Zhou, Huangjie Zheng, Zhendong Wang, Mingzhang Yin, and Hai Huang.
\newblock Score identity distillation: Exponentially fast distillation of pretrained diffusion models for one-step generation.
\newblock In \emph{Forty-first International Conference on Machine Learning}, 2024.

\bibitem[Zhu et~al.(2024)Zhu, Huang, Liao, Liew, Yan, Feng, and Wang]{zhu2024dig}
Lianghui Zhu, Zilong Huang, Bencheng Liao, Jun~Hao Liew, Hanshu Yan, Jiashi Feng, and Xinggang Wang.
\newblock Dig: Scalable and efficient diffusion models with gated linear attention.
\newblock \emph{arXiv preprint arXiv:2405.18428}, 2024.

\end{thebibliography}
\bibliographystyle{iclr2024_conference}

\newpage
\appendix

\section{Related Works}

In this section, we review related works from two perspectives, namely efficient diffusion architectures and linear-complexity token mixers. 

\subsection{Efficient Diffusion Architectures}

There are mainly two mainstreams of works aiming at more efficient diffusion models, including efficient sampling for a reduced number of sampling time-steps~\cite{song2023consistency,luo2023latent,kim2023consistency,ma2024deepcache,zhou2024score} and efficient architectures for faster network inference. 
This paper focuses on the latter, which is a bottleneck for generating high-resolution visual results, particularly due to the self-attention token mixers in existing diffusion backbones. 

To mitigate the efficiency issue triggered by the quadratic time and memory complexity, a series of works, including DiS~\cite{fei2024scalable}, DiM~\cite{teng2024dim}, DiG~\cite{zhu2024dig}, Diffusion-RWKV~\cite{fei2024diffusion}, DiffuSSM~\cite{yan2024diffusion}, and Zigma~\cite{hu2024zigma}. 
These works have successfully adapted recent state space models like Mamba~\cite{gu2023mamba}, RWKV~\cite{peng2023rwkv}, or Linear Attention~\cite{katharopoulos2020transformers} into diffusion architectures. 
% However, they maintain the sequential nature of these models, treating the input feature maps as 1D sequences. 
% Despite employing multiple scanning orders to mitigate causal dependency issues, this sequential approach overlooks the spatial relationships inherent in 2D feature maps. 
% Since sequential generation and diffusion denoising are fundamentally different tasks, we propose a non-causal model tailored specifically for diffusion architectures in this paper. 
However, these architectures maintain a causal restriction for diffusion tasks, processing input spatial tokens one by one, with generated tokens conditioned only on preceding tokens. In contrast, the diffusion task allows models to access all noisy tokens simultaneously, making the causal restriction unnecessary. To address this, we eliminate the causal restriction and introduce a non-causal token mixer specifically designed for the diffusion model. 

Additionally, previous works have primarily focused on class-conditioned image generation. 
For text-to-image generation, \cite{kim2023bk} propose architectural pruning for Stable Diffusion (SD) by reducing the number of UNet stages and blocks, which is orthogonal to our focus on optimizing self-attention layers. 

\subsection{Linear-Complexity Token Mixers}

Despite the widespread adoption of Transformer~\cite{vaswani2017attention} across various fields due to its superior modeling capacity, the quadratic time and memory complexity of the self-attention mechanism often leads to efficiency issues in practice. 
A series of linear-complexity token mixers are thus introduced as alternatives, such as Linear Attention~\cite{katharopoulos2020transformers}, State Space Model~\cite{gu2021efficiently}, and their variants including Mamba~\cite{gu2023mamba}, Mamba2~\cite{dao2024transformers}, mLSTM~\cite{beck2024xlstm,peng2021random}, Gated Retention~\cite{sun2024you}, DFW~\cite{mao2022fine,pramanik2023recurrent}, GateLoop~\cite{katsch2023gateloop}, HGRN2~\cite{qin2024hgrn2}, RWKV6~\cite{peng2024eagle}, GLA~\cite{yang2023gated}, \textit{etc}. 
These models are designed for tasks requiring sequential modeling, making it non-trivial to apply them to non-causal vision problems. 
Addressing this challenge is the main focus of our paper. 

For visual processing tasks, beyond the direct treatment of inputs as sequences, there are concurrent works focused on non-causal token mixers with linear complexity. 
MLLA~\cite{han2024demystify} employs Linear Attention~\cite{katharopoulos2020transformers} as token mixers in vision backbones without a gating mechanism for hidden states. 
In VSSD~\cite{shi2024vssd}, various input tokens share the same group of gating values. 
In contrast, the model proposed in this paper relaxes these gating assumptions, offering a generalized non-causal version of various modern state-space models. 

\section{Theoretical Proof}
\setcounter{prop}{0}

\begin{prop}
Assuming that the mean of the $j$-th channel in the input feature map $X$ is $\mu_j$, and denoting $(CB^\top)\odot\tilde{A}$ as $M$, the mean of this channel in the output feature map $Y$ is $\mu_j\sum_{k=1}^nM_{ik}$. 
\end{prop}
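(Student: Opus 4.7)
The plan is to unfold the matrix product implicit in Eq.~\ref{eq:4} and apply linearity. Reading $Y = ((CB^\top) \odot \tilde{A})X = MX$, each output entry decomposes as $Y_{ij} = \sum_{k=1}^n M_{ik} X_{kj}$, so the $j$-th output channel at token $i$ is a fixed row-wise linear combination of the $j$-th input channel across all tokens. Taking the mean over the token axis and using $\mathbb{E}_k[X_{kj}] = \mu_j$, the coefficient $M_{ik}$ can be pulled outside, $\mu_j$ factors out of the sum, and what remains is exactly $\mu_j \sum_{k=1}^n M_{ik}$.

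I would carry this out in three short steps: first, rewrite $Y$ via Eq.~\ref{eq:4} to expose the matrix $M$; second, isolate a single channel index $j$ and write the scalar equation for $Y_{ij}$; third, apply linearity to substitute the channel mean and collect the constant factor. Nothing nontrivial intervenes here, since there is no nonlinearity between $X$ and $Y$ in this stage and $M$ is treated as fixed for the purposes of the proposition. The author's remark that the proof is ``straightforward'' is accurate in this sense.

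The only conceptual subtlety, rather than a technical obstacle, is that $M$ depends on $X$ through the functions producing $B$ and $C$, so the cleanest reading of the statement is conditional on a given instantiation of $M$, or equivalently as describing how the channel-mean component of $X$ propagates through the linear stage $MX$. This is precisely the reading Sec.~\ref{sec:3-2} relies on: forcing $\sum_{k=1}^n M_{ik} = 1$ via the auxiliary normalizer $Z$ pins the output channel mean at $\mu_j$ irrespective of the sequence length $n$, which is what eliminates the cross-resolution distribution shift that motivates the normalization-aware variant.
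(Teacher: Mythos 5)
Your argument is correct and is exactly the computation the paper has in mind when it declares the proof ``straightforward'': expand $Y_{ij}=\sum_{k=1}^n M_{ik}X_{kj}$, replace $X_{kj}$ by its channel mean $\mu_j$, and factor it out to obtain $\mu_j\sum_{k=1}^n M_{ik}$. Your added remark that the statement should be read entrywise with $M$ held fixed (so that the conclusion is about how the channel-mean component propagates through the linear map, which is what makes $\sum_k M_{ik}=1$ the right normalization target) is a faithful and correct clarification rather than a deviation.
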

The proof is straightforward. 

\begin{prop}
    Given that $\tilde{A}=FG^\top$, $F,G\in\mathbb{R}^{n\times r}$, and $B,C\in\mathbb{R}^{n\times d'}$, denoting $C_i=c(X_i)$, $B_i=b(X_i)$, $F_i=f(X_i)$, and $G_i=g(X_i)$, there exist corresponding functions $f'$ and $g'$ such that Eq.~\ref{eq:4} of the main manuscript can be equivalently implemented as linear attention, expressed as $Y=f'(X)g'(X)^\top X$. 
\end{prop}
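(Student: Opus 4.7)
The plan is to write out the Hadamard product $(CB^\top)\odot\tilde{A}$ entrywise as a double sum and then recognize that this double sum factors as an inner product of Kronecker-product feature maps. Concretely, first I would expand
$$M_{ij} = (CB^\top)_{ij}\cdot (FG^\top)_{ij} = \Bigl(\sum_{u=1}^{d'} C_{iu} B_{ju}\Bigr)\Bigl(\sum_{v=1}^{r} F_{iv} G_{jv}\Bigr) = \sum_{u=1}^{d'}\sum_{v=1}^{r} \bigl(C_{iu}F_{iv}\bigr)\bigl(B_{ju}G_{jv}\bigr),$$
which is exactly the inner product of two vectors in $\mathbb{R}^{d' r}$ indexed by the pair $(u,v)$.

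Second, I would define the two feature maps $f'$ and $g'$ as row-wise Kronecker products,
$$f'(X_i) := c(X_i)\otimes f(X_i)\in\mathbb{R}^{d' r},\qquad g'(X_j) := b(X_j)\otimes g(X_j)\in\mathbb{R}^{d' r},$$
so that $M_{ij} = f'(X_i)\,g'(X_j)^\top$. Stacking across $i$ gives $M = f'(X)\,g'(X)^\top$ where $f'(X),g'(X)\in\mathbb{R}^{n\times d' r}$ are obtained by applying $f'$ and $g'$ row by row to $X$.

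Third, I would substitute this factorization back into Eq.~4 and use associativity of matrix multiplication:
$$Y = M X = \bigl(f'(X)\,g'(X)^\top\bigr) X = f'(X)\,\bigl(g'(X)^\top X\bigr),$$
which is precisely the claimed linear-attention form. The right-most grouping makes the linear complexity manifest, since $g'(X)^\top X\in\mathbb{R}^{d' r\times d}$ can be built in $O(n d' r d)$ time and the subsequent multiplication by $f'(X)$ costs another $O(n d' r d)$.

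The only subtlety, and the step I would double-check most carefully, is the identity $(CB^\top)\odot(FG^\top) = (C\otimes_{\text{row}} F)(B\otimes_{\text{row}} G)^\top$ with the \emph{row-wise} (Khatri--Rao) Kronecker product; this is a standard fact but it is the place where a naive application of the mixed-product property $(C\otimes F)(B\otimes G)^\top = (CB^\top)\otimes(FG^\top)$ could mislead one into the wrong dimensions. Once this lemma is invoked correctly, the rest of the argument is a one-line regrouping, and the proposition follows. In practice one then replaces the explicit Kronecker construction by a learned MLP whose output dimension is $d' r$, as the paper does in its implementation.
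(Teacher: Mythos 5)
Your proposal is correct and follows essentially the same route as the paper's own proof: expand the Hadamard product entrywise into a double sum over $(u,v)$, recognize it as an inner product of the row-wise Kronecker features $f'(X_i)=c(X_i)\otimes f(X_i)$ and $g'(X_j)=b(X_j)\otimes g(X_j)$, and regroup $Y=f'(X)\bigl(g'(X)^\top X\bigr)$. The additional remarks on the Khatri--Rao identity and the $O(n d' r d)$ cost are consistent with, and slightly more explicit than, the paper's presentation.
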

\begin{proof}
Given existing conditions, we have: 
\begin{equation}
    \begin{aligned}
    (CB^\top)\odot \tilde{A}&=[(c(X_i)b^\top (X_j))\odot (f(X_i)g^\top (X_j))]_{i,j}\\
    &=[(\sum_{u=1}^{d'}\{c(X_i)_{u}b(X_j)_{u}\})(\sum_{v=1}^{r}\{f(X_i)_{v}g(X_j)_{v}\})]_{i,j}\\
    &=[\sum_{u=1}^{d'}\sum_{v=1}^{r}\{(c(X_i)_{u}f(X_i)_{v})(b(X_j)_{u}g(X_j)_{v})\}]_{i,j}\\
    &=[(c(X_i)\otimes f(X_i))(b(X_j)\otimes g(X_j))^\top]_{i,j},
    \end{aligned}
\end{equation}
where $\otimes$ denotes Kronecker product. 
Defining $f'(X_i)=c(X_i)\otimes f(X_i)$ and $g'(X_i)=b(X_i)\otimes g(X_i)$, we derive $Y=f'(X)g'(X)^\top X$. 
\end{proof}

\begin{prop}
    Given that $\tilde{A}\in\mathbb{R}^{d'\times n\times n}$, if for each $1\leq u\leq d'$, $\tilde{A}_{u}$ is low-rank separable: $\tilde{A}_{u}=F_{u}G^\top_{u}$, where $F_{u},G_{u}\in\mathbb{R}^{n\times r}$, $F_{uiv}=f(X_i)_{uv}$, and $G_{ujv}=g(X_j)_{uv}$, there exist corresponding functions $f'$ and $g'$ such that the computation $Y_i=C_iH_i=C_i\sum_{j=1}^n\{\tilde{A}_{:ij}\odot (B_j^\top X_j)\}$ can be equivalently implemented as linear attention, expressed as $Y_i=f'(X_i)g'(X)^\top X$, where $\tilde{A}_{:ij}$ is a column vector and can broadcast to a $d'\times d$ matrix. 
\end{prop}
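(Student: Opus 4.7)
The plan is to mirror the proof of Proposition~\ref{prop:2}, but carry an extra channel index $u$ through the whole calculation. The key observation is that the $u$-th component of the hidden state decouples cleanly thanks to the low-rank assumption $\tilde{A}_u = F_u G_u^\top$, so the only real work is tracking the three independent index axes ($u$ over $d'$, $v$ over $r$, $j$ over $n$) and collapsing them into a single ``attention channel'' axis of size $d' r$ at the end.

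First, I would unfold the definition of $Y_i$ component by component. Since $\tilde{A}_{:ij}$ is the column $(\tilde{A}_{uij})_{u=1}^{d'}$ broadcast across the $d$ columns of $B_j^\top X_j$, the elementwise product $\tilde{A}_{:ij}\odot (B_j^\top X_j)$ is simply the $d'\times d$ matrix whose $u$-th row equals $\tilde{A}_{uij}\,B_{ju}\,X_j$. Substituting $\tilde{A}_{uij}=\sum_{v=1}^{r} f(X_i)_{uv}\,g(X_j)_{uv}$, the $u$-th row of $H_i$ becomes
\begin{equation*}
(H_i)_u \;=\; \sum_{j=1}^n \sum_{v=1}^r f(X_i)_{uv}\,g(X_j)_{uv}\,B_{ju}\,X_j.
\end{equation*}
Second, I would contract with $C_i$: $Y_i = \sum_{u=1}^{d'} C_{iu}(H_i)_u$, then swap the order so the sum over $j$ is outermost, giving
\begin{equation*}
Y_i \;=\; \sum_{j=1}^n \Bigl(\sum_{u=1}^{d'}\sum_{v=1}^r \bigl[C_{iu}f(X_i)_{uv}\bigr]\bigl[B_{ju}g(X_j)_{uv}\bigr]\Bigr) X_j.
\end{equation*}
Third, I would define $f'(X_i)\in\mathbb{R}^{1\times d'r}$ and $g'(X_j)\in\mathbb{R}^{1\times d'r}$ by flattening the pair $(u,v)$ into a single index, setting $f'(X_i)_{(u,v)}=C_{iu}f(X_i)_{uv}$ and $g'(X_j)_{(u,v)}=B_{ju}g(X_j)_{uv}$. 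Equivalently, these are the row-wise Khatri--Rao products of $C_i$ with $f(X_i)$ and of $B_j$ with $g(X_j)$, reducing to the Kronecker products of Proposition~\ref{prop:2} in the degenerate case where $f(X_i),g(X_j)$ collapse to row vectors. The inner double sum is then exactly $f'(X_i)\,g'(X_j)^\top$, so $Y_i = f'(X_i)\sum_j g'(X_j)^\top X_j = f'(X_i)\,g'(X)^\top X$, which is the claimed linear-attention form.

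The main obstacle is bookkeeping, not mathematics: one must verify that the broadcast convention for $\tilde{A}_{:ij}$ against the $d'\times d$ matrix $B_j^\top X_j$ is the row-wise one assumed above, and that flattening $(u,v)\mapsto (u-1)r+v$ preserves the bilinear pairing. Once those are pinned down, Proposition~\ref{prop:3} reduces to Proposition~\ref{prop:2} applied slicewise in $u$ and then summed, and no further analytic input is required.
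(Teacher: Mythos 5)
Your proof is correct and follows essentially the same route as the paper's: both expand $Y_i$ as a triple sum over $(u,v,j)$, pull the $j$-sum outside, and absorb the $(u,v)$ pair into a single flattened axis of size $d'r$, with your row-wise Khatri--Rao products $C_{iu}f(X_i)_{uv}$ and $B_{ju}g(X_j)_{uv}$ being exactly the paper's $\mathrm{vec}(c(X_i)\cdot f(X_i))$ and $\mathrm{vec}(b(X_j)\cdot g(X_j))$. No substantive difference.
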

\begin{proof}
    Given existing conditions, we have:
    \begin{equation}
        \begin{aligned}
            Y_i&=\sum_{u=1}^{d'}[c(X_i)_{u}\{\sum_{j=1}^{n}\sum_{v=1}^{r}(f(X_i)_{uv}g(X_j)_{uv}b(X_j)_{u}X_j)\}]\\
            &=\sum_{u=1}^{d'}\sum_{v=1}^{r}[c(X_i)_{u}f(X_i)_{uv}\sum_{j=1}^{n}\{g(X_j)_{uv}b(X_j)_{u}X_j\}]\\
            &=\mathrm{vec}(c(X_i)\cdot f(X_i))[\mathrm{vec}(b(X_j)\cdot g(X_j))]_{j}^{\top}X,
        \end{aligned}
    \end{equation}
    where $f(X_i)=F_{:i:}$ and $g(X_j)=G_{:j:}$ are $d'\times r$ matrices, $\cdot$ denotes element-wise multiplication with broadcasting, and $\mathrm{vec}$ represents flatting a matrix into a row vector. 
    Defining $f'(X_i)=\mathrm{vec}(c(X_i)\cdot f(X_i))$ and $g'(X_i)=\mathrm{vec}(b(X_j)\cdot g(X_j))$, we derive $Y=f'(X)g'(X)^\top X$. 
\end{proof}

\section{Additional Experiments}

\begin{figure}[!t]
\begin{floatrow}[1]
\figureboxf{}{

  \centering
  \includegraphics[width=0.8\linewidth]{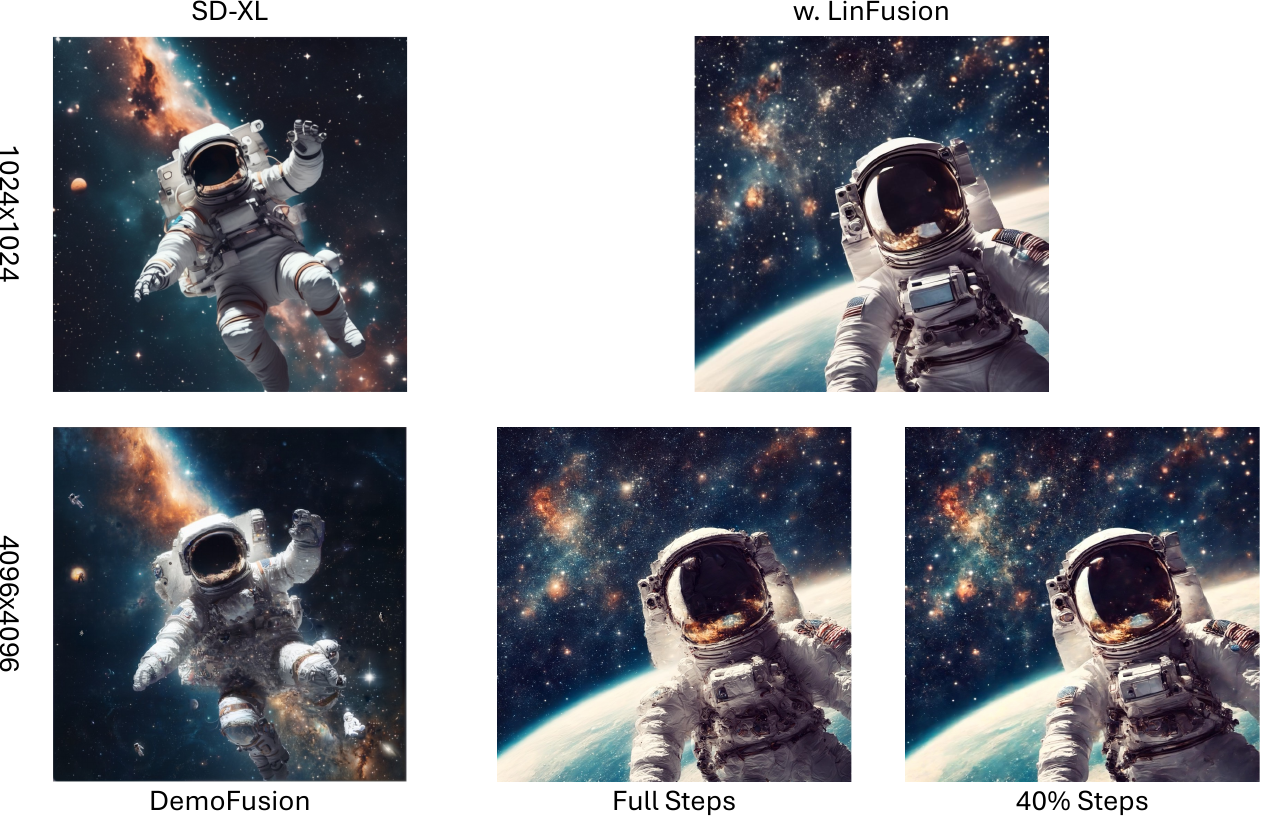}
  \vspace{-0.2cm}
  \caption{LinFusion is complementary to pipelines dedicated to high-resolution generation like DemoFusion. To enhance the performance, instead of working patch-by-patch, we handle the image as a whole benefiting from the efficient LinFusion architecture. Moreover, we reveal that skipping part of the denoising steps in the high-resolution stage can further improve the efficiency without hurting the performance. The prompt is ``\texttt{An astronaut floating in space. Beautiful view of the stars and the universe in the background.}".}
  \label{fig:a1}
  }
  \end{floatrow}
\vspace{-0.3cm}
\end{figure}

\begin{figure}[!t]
\begin{floatrow}[1]
\figureboxf{}{

  \centering
  \includegraphics[width=0.8\linewidth]{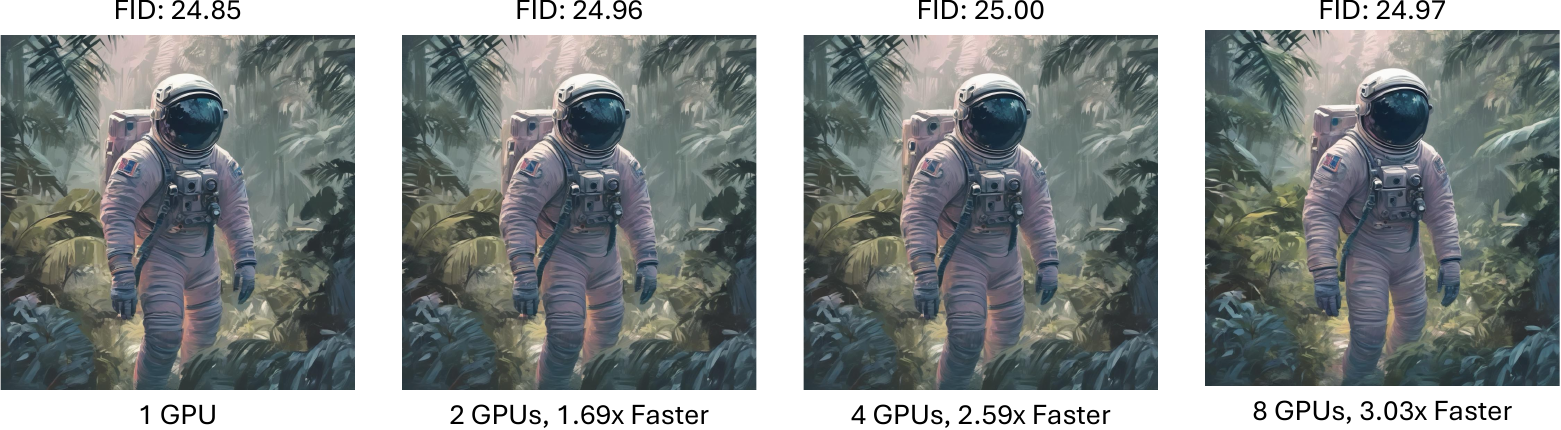}
  \vspace{-0.2cm}
  \caption{LinFusion is complementary to pipelines for high-resolution generation like DistriFusion. Using constant communication cost among various GPU, it achieves highly comparable performance with single-GPU inference. The prompt is ``\texttt{Astronaut in a jungle, cold color palette, muted colors, detailed, 8k}".}
  \label{fig:a2}
  }
  \end{floatrow}
\vspace{-0.3cm}
\end{figure}

\begin{figure}[!t]
\begin{floatrow}[1]
\figureboxf{}{

  \centering
  \includegraphics[width=0.8\linewidth]{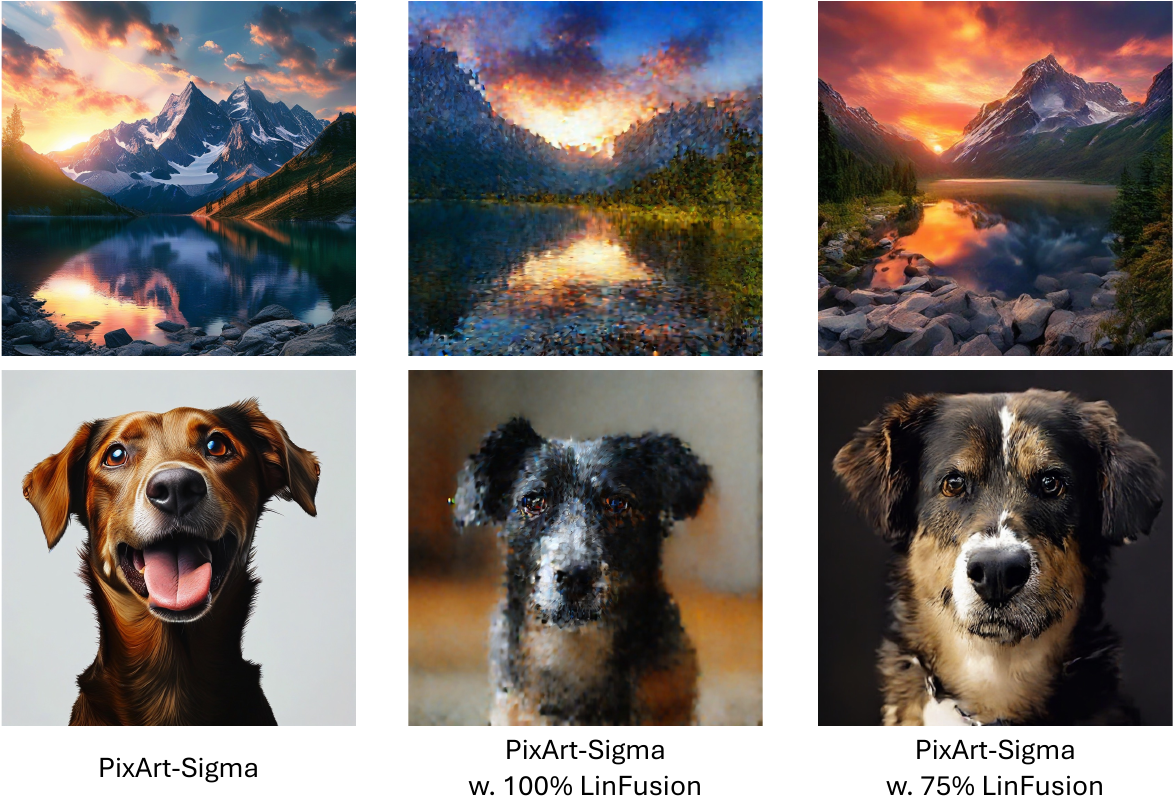}
  \vspace{-0.2cm}
  \caption{By default, LinFusion replaces all self-attention layers in a diffusion backbone. When applied to DiT-based structures like PixArt-Sigma, this configuration often struggles to generate smooth results. Leaving a small part of original self-attention layers unmoved, \textit{e.g.}, 25\%, could largely alleviate this challenge. The prompts are ``\texttt{A photo of beautiful mountain with realistic sunset and blue lake, highly detailed, masterpiece}" and ``\texttt{dog}" respectively.}
  \label{fig:a3}
  }
  \end{floatrow}
\vspace{-0.3cm}
\end{figure}

\textbf{Ultrahigh-Resolution Generation.} 
We present qualitative examples to illustrate the effectiveness of LinFusion on ultrahigh-resolution generation in Fig.~\ref{fig:a1}. 
We build LinFusion upon DemoFusion~\cite{du2024demofusion}, a pipeline dedicated to high-resolution generation. 
In the original implementation, for efficiency, DemoFusion handles a high-resolution image patch-by-patch and averages the outputs of overlapped areas~\cite{bar2023multidiffusion}. 
However, we find that such a patch-wise treatment largely ignores the holistic text-image relationships. 
As shown in Fig.~\ref{fig:a1}(DemoFusion), there are stars on the body of the astronaut. 
With an efficient architecture introduced by LinFusion, we do not have to conduct inference patch-by-patch. 
Instead, the whole image can be accommodated to a single GPU for denoising, which addresses the above limitation effectively as shown in Fig.~\ref{fig:a1}(Full Steps). 

Moreover, DemoFusion has to conduct full steps in the high-resolution denoising stage, which would introduce significant latency. 
Motivated from the insight in SDEdit~\cite{meng2021sdedit} that early denoising steps tend to take over the overall image layouts, we propose to skip some initial steps in the high-resolution stage, given that the overall image structures have been produced in the low-resolution stage. 
We find that it not only improves the efficiency but also makes the pipeline more robust to the turbulence on image layout in the high-resolution stage, as shown in Fig.~\ref{fig:a1}(40\% Steps). 

\textbf{Distributed Parallel Inference.} 
We supplement qualitative results of distributed parallel inference by building LinFusion upon DistriFusion~\cite{li2024distrifusion} in Fig.~\ref{fig:a2}. 
Using constant communication cost among various GPUs, it achieves highly comparable performance with single-GPU inference. 
Unlike the original DistriFusion, LinFusion offers significant acceleration using multiple GPUs without the dependency on NVLink.

\begin{table}[!t]
\begin{floatrow}[1]
\tableboxf{\caption{Performance of LinFusion built upon various models on the COCO benchmark.}}{
    \centering
\scriptsize
\begin{tabular}{c|cc}
Setting                                                                               & FID($\downarrow$)    & CLIP-T($\uparrow$) \\ \hline
SD-v1.5                                                                    & 12.86  & 0.321  \\ 
w. LinFusion                                                                 & \textbf{12.57} & \textbf{0.323}  \\ \hline
SD-v2.1                                                                    & \textbf{12.84}  & \textbf{0.333}  \\ 
w. LinFusion                                                                 & 13.84 & 0.329  \\ \hline
SD-XL                                                                    & \textbf{14.74}  & \textbf{0.340}  \\ 
w. LinFusion                                                                 & 15.72 & 0.338  
  \\ \hline
PixArt-Sigma                                                                    & 26.32  & \textbf{0.334} \\ 
w. 75\% LinFusion                                                                 & \textbf{24.32} & 0.327  \\ 
  \bottomrule
\end{tabular}
    \label{tab:a1}
}
\vspace{-0.5cm}
\end{floatrow}
\end{table}

\textbf{Results on More Architectures.} 
We conduct experiments on a variety of diffusion architectures in this paper, including SD-v1.5, SD-v2.1~\cite{rombach2022high}, SD-XL~\cite{podell2023sdxl}, and PixArt-Sigma~\cite{chen2023pixart}. 
The former three adopt transformer-based UNet while the last one is based on DiT~\cite{Peebles2022DiT}, a pure-Transformer structure. 
Their quantitative results are listed in Tab.~\ref{tab:a1}. 

We find that on SD-v2.1 and SD-XL, LinFusion leads to slightly inferior results. 
We speculate that the reason lies in the training data used for LinFusion, which consists of only $\sim$160K relatively low-resolution samples, the majority of which are below $512\times512$ resolution. 
Involving more high-quality samples can benefit the performance. 

On PixArt-Sigma, we find that replacing all the self-attention layers in the DiT would result in unnatural results, as shown in Fig.~\ref{fig:a3}. 
We speculate that the challenge arises because self-attention is the core and sole mechanism for managing token relationships in DiT. 
Replacing these layers entirely with LinFusion may create a significant divergence from the original architecture, leading to difficulties during training. 
As shown in Fig.~\ref{fig:a3}, leaving a small part of original self-attention layers unchanged, \textit{e.g.}, 25\%, could largely alleviate this challenge. 

\section{Limitations}

The motivation of LinFusion is to explore a linear-complexity diffusion architecture by experimentally replacing all the self-attention layers with the proposed generalized linear attention. 
This may not be the optimal configuration in practice. 
For example, it could be promising to explore hybrid structures and apply attention on deep features with a relatively smaller number of tokens but a large number of feature channels, which appears to be crucial for DiT-based architectures and could be a meaningful future direction.

\end{document}